\newcommand{\dasheddline}{\noindent\hdashrule[0.5ex]{\linewidth}{0.5pt}{3pt 2pt}}
\newtheorem{lemma}{Lemma}
\newtheorem{assumption}{Assumption}
\newtheorem{theorem}{Theorem}
\newtheorem{definition}{Definition}
\newcommand{\E}{\mathbb{E}}
\newcommand{\R}{\mathbb{R}}
\newcommand{\cA}{\mathcal{A}}
\newcommand{\cD}{\mathcal{D}}
\newcommand{\cP}{\mathcal{P}}
\newcommand{\cX}{\mathcal{X}}
\newcommand{\cY}{\mathcal{Y}}
\renewcommand{\epsilon}{\varepsilon}
\renewcommand{\phi}{\varphi}
\definecolor{powderpink}{RGB}{245,210,211} 
\definecolor{pastelgreen}{RGB}{189,208,196}
\definecolor{pastelblue}{RGB}{154,183,211}
\definecolor{lilac}{RGB}{190, 173, 201}
\definecolor{babypink}{rgb}{0.96, 0.76, 0.76}
\definecolor{bazaar}{rgb}{0.6, 0.47, 0.48}
\definecolor{cambridgeblue}{rgb}{0.64, 0.76, 0.68}
\definecolor{glaucous}{rgb}{0.38, 0.51, 0.71}
\definecolor{fuzzywuzzy}{rgb}{0.8, 0.4, 0.4}
\definecolor{cadmiumgreen}{rgb}{0.0, 0.42, 0.24}
\renewcommand{\arraystretch}{1.15} % Choose a number that suits your needs
\newcolumntype{Y}{>{\scriptsize}p{1.7cm}}
\title{PITA: Preference-Guided Inference-Time Alignment for LLM Post-Training}
\author{%
  Sarat Chandra Bobbili$^{1}$ \quad
  Ujwal Dinesha$^{1}$ \quad
  Dheeraj Narasimha$^{2}$ \quad
  Srinivas Shakkottai$^{1}$ \\
  $^{1}$Texas A\&M University, $^{2}$ Inria.\\
}
\begin{document}

\maketitle

\begin{abstract}
Inference-time alignment enables large language models (LLMs) to generate outputs aligned with end-user preferences without further training. Recent post-training methods achieve this by using small guidance models to modify token generation during inference. These methods typically optimize a reward function KL-regularized by the original LLM taken as the reference policy.   A critical limitation, however, is their dependence on a pre-trained reward model, which requires fitting to human preference feedback, a potentially unstable process. In contrast, we introduce PITA, a novel framework that integrates preference feedback directly into the LLM's token generation, eliminating the need for a reward model.  PITA learns a small preference-based guidance policy to modify token probabilities at inference time without LLM fine-tuning, bypassing the pre-trained reward model dependency.   The problem is framed as identifying an underlying preference distribution, solved through iterative refinement of the  preference-based guidance model. We evaluate PITA across diverse tasks, including mathematical reasoning, TL;DR summarization, and sentiment classification, demonstrating its effectiveness in aligning LLM outputs with user preferences.

% A fundamental characteristic of inference-time alignment of large language models(LLM) is generation of  high-quality output without requiring additional training or fine-tuning in the post-training phase.
% Recent methodologies in post-training LLMs rely on using a learned reward model to guide generations posing the inference time generation as a stochastic search problem albeit at an additional computation cost.
% We propose PITA, a novel framework that integrates the use of a preference model to learn a guidance policy (an aligned LLM) precluding the need for a reward model. 
% PITA directly modifies the probabilities of next token in the base LLM at each token generation step, thereby reducing the computational overhead in comparison to fine-tuning an LLM with preference information. 
% Our approach involves learning the optimal Q function as a function of the underlying preference distribution.
% We evaluate PITA on diverse tasks ranging from math reasoning using GSM8K, and Math500 datasets, and sentiment classification on IMDB, Jigsaw datasets demonstrating it's effectiveness.
\end{abstract}

%%%%%%%%%%%%%%%%%%%%%%%%%%%%%%%%%%%%%%%%%%%%%%%%%%%%%%%%%%%%
% main paper
\section{Introduction}

Large language model (LLM) alignment traditionally relies on additional training, such as fine-tuning with human feedback, to ensure outputs conform to human desires. However, fine-tuning LLMs is computationally expensive and complex. This has motivated the study of inference-time alignment, where the pre-trained LLM’s weights remain frozen, and alignment is achieved by steering the model’s decoding process to meet end-user needs. Inference-time alignment aims to align generated outputs with user criteria (e.g., helpfulness, correctness, sentiment) without updating the LLM's parameters. By intervening only during generation, this approach avoids retraining overhead and enables the alignment of opaque models whose internal weights are inaccessible.

Recent post-training alignment methods employ small guidance models or value functions to influence token generation during decoding. These approaches frame decoding as an optimization problem, where the model maximizes a user-provided reward while maintaining proximity to its original style. They optimize a KL-regularized objective, balancing the base LLM’s log-likelihood with the reward from the guidance model. The reward signal originates from a pre-trained model that scores candidate tokens based on their anticipated impact on the alignment of subsequent tokens with the reward criteria. These methods integrate the guidance model into the decoding loop, adjusting the next-token distribution at each step. Although computationally intensive, these techniques enhance alignment without modifying the LLM’s weights.

\begin{figure}[!ht]
  \centering
  \includegraphics[width=\textwidth]{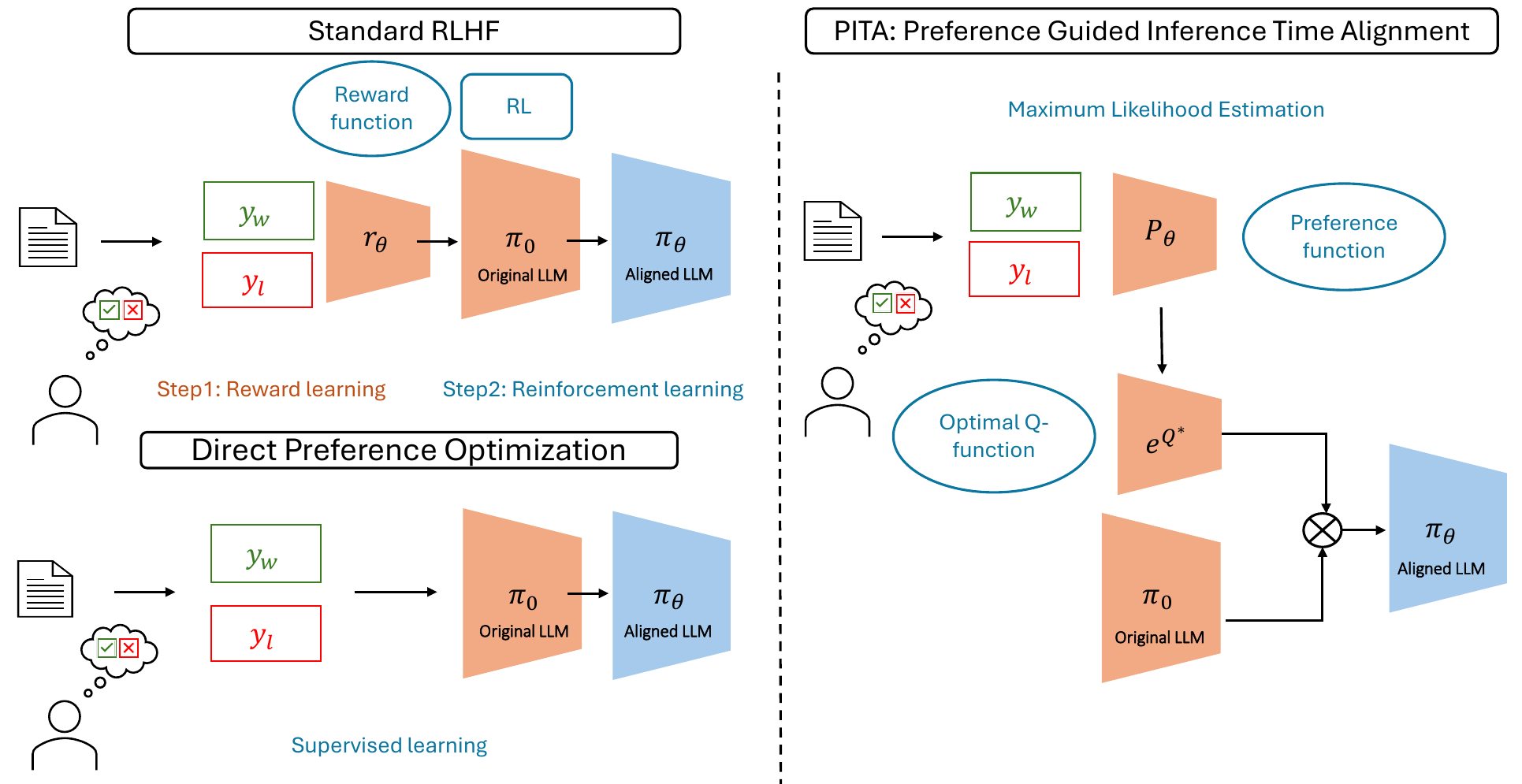} % replace with your filename
  \caption{Overview of PITA training and inference time implementation. During training, our algorithm learns a Action-Value function $Q^*$ directly from the preference information. At Inference, the output distribution of the original LLM, $\pi_{\mathrm{ref}}$, is modified with exponentially weighted $Q^*$-values}
  \label{fig:pita-vs-rlhf}
  \vspace{-0.2in}
\end{figure}

Reward-model-guided alignment methods face several challenges. First, they assume the availability of a pre-trained reward model, which is often not available. Reward models are typically learned from human-labeled preference data. However, preference data might be noisy or inconsistent, hindering the accurate learning of a reward model that reflects true user intent. Moreover, training a reward model can be unstable and prone to brittleness, where minor inaccuracies in the model can cause the LLM to generate misaligned outputs. These issues highlight the difficulty of learning a reliable reward model from preference data, motivating the need to eliminate an explicit reward predictor.

In this work, we introduce \textbf{PITA: Preference-Guided Inference-Time Alignment}, a novel framework that eliminates the need for a reward model in LLM post-training. PITA directly uses preference feedback during inference to guide token generation. The key innovation is the use of a small, preference-based guidance policy that operates alongside a frozen LLM and adjusts token probabilities in real-time, without modifying the LLM’s weights. This guidance policy is trained using user preference signals, significantly reducing the requirement for extensive human-labeled data and avoiding the complexity of reward model training.

%PITA frames the alignment task as learning an underlying preference distribution over the model's outputs. It employs stochastic search and iterative refinement to identify tokens that better align with user preferences. During each decoding round, the guidance policy explores multiple continuations and updates based on user feedback, progressively refining the model’s predictions. This iterative process guides the model to converge towards outputs that better match user preferences, all while the base LLM remains unchanged. Figure~\ref{fig:pita-vs-rlhf} illustrates the differences between PITA and standard post-training methods.

PITA frames the alignment task as learning an underlying preference distribution over the model's outputs. It uses stochastic search and iterative refinement to identify tokens that align better with user preferences. During each decoding round, the guidance policy explores multiple continuations and updates based on user feedback, progressively refining the model’s predictions. This process guides the model to converge towards outputs that better match user preferences, while keeping the base LLM unchanged. Figure~\ref{fig:pita-vs-rlhf} illustrates the differences between PITA and standard post-training.% methods.

We first provide a performance analysis of PITA, using the notion of a linear function approximator for the preference model. As the learned preference model aligns with the true one, PITA's guidance model aligns with the true reward-to-go, resulting in sub-linear regret. We then evaluate PITA on diverse tasks, including mathematical reasoning and sentiment alignment datasets. An example of the  outputs generated by the reference LLM vs. PITA for a math reasoning problem from the GSM8K dataset is shown in Figure~\ref{fig:math}.  Here, PITA aligns the model’s steps with user-preferred solutions without fine-tuning, improving accuracy and consistency.    In sentiment tasks, PITA ensures outputs meet user-defined sentiment or safety criteria, such as avoiding toxic language.

\begin{figure}[ht]
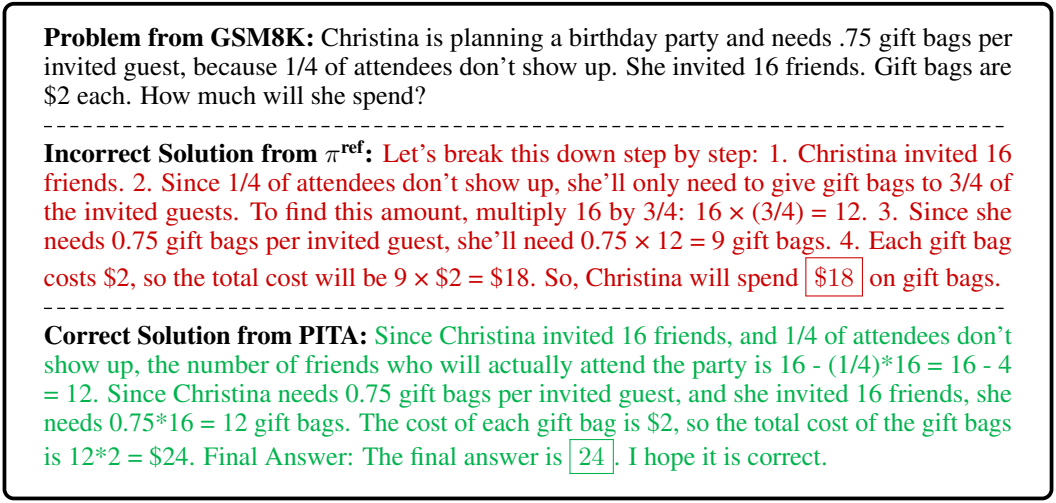

\centering
\begin{minipage}{\textwidth}
\begin{tcolorbox}[colback=white, colframe=black]

\textbf{Problem from GSM8K:} Christina is planning a birthday party and needs .75 gift bags per invited guest, because 1/4 of attendees don’t show up. She invited 16 friends. Gift bags are \$2 each. How much will she spend?

\dasheddline

\textbf{Incorrect Solution from $\pi^{\text{ref}}$:} \textcolor{red!80!black}{Let’s break this down step by step:
1. Christina invited 16 friends. 2. Since 1/4 of attendees don’t show up, she’ll only need to give gift bags to 3/4 of the invited guests. To find this amount, multiply 16 by 3/4: 16 × (3/4) = 12. 3. Since she needs 0.75 gift bags per invited guest, she’ll need 0.75 × 12 = 9 gift bags. 4. Each gift bag costs \$2, so the total cost will be 9 × \$2 = \$18.  So, Christina will spend $\boxed{\$18}$ on gift bags.}\\ 
\dasheddline

\textbf{Correct Solution from PITA:} \textcolor{blue!30!green}{Since Christina invited 16 friends, and 1/4 of attendees don’t show up, the number of friends who will actually attend the party is 16 - (1/4)*16 = 16 - 4 = 12.
Since Christina needs 0.75 gift bags per invited guest, and she invited 16 friends, she needs 0.75*16 = 12 gift bags.
The cost of each gift bag is \$2, so the total cost of the gift bags is 12*2 = \$24. Final Answer: The final answer is \boxed{24}. I hope it is correct.
}

% \textbf{Problem from GSM8K:} Mary is an avid gardener. Yesterday, she received 18 new potted plants from her favorite plant nursery. She already has 2 potted plants on each of the 40 window ledges of her large country home. Feeling generous, she has decided that she will give 1 potted plant from each ledge to friends and family tomorrow. How many potted plants will Mary remain with?

% \dasheddline

% \textbf{Incorrect Solution from $\pi^{\text{ref}}$:} \textcolor{red!80!black}{Mary has 18 new potted plants and 2 potted plants on each of the 40 window ledges, so she has $2\cdot 40+18=86$ potted plants. After giving away 40 potted plants, she will have $86-40=\boxed{46}$ potted plants.}\\ 
% \dasheddline

% \textbf{Correct Solution from PITA:} \textcolor{blue!30!green}{Mary had 18 new plants. She already had 2 plants on each of the 40 window ledges, so she had a total of 2 x 40 = 80 plants. She will give away 1 plant from each of the 40 window ledges, so she will give away 40 plants. She will be left with 18 + 80 - 40 = \boxed{58}.
% Final Answer: The final answer is $58$. I hope it is correct.}

\end{tcolorbox}
\end{minipage}
\caption{Comparison of Incorrect Solution from the Reference model and the Correct Solution from PITA to a math reasoning problem.}% involving book fractions.}
\label{fig:math}
\end{figure}

We compare PITA against two reward-based alignment schemes: Q\# ~\citep{zhou2025q} uses a known reward, and Q\#-HF where we learn reward from preference data via maximum likelihood estimation, followed by alignment using Q\#. PITA performs comparably to Q\#, while outperforming the learned-reward-based alignment method (as well as the base LLM), demonstrating its superior efficiency and effectiveness in utilizing preference data. PITA achieves this level of performance without requiring a pre-trained reward model, highlighting its advantage in preference-guided inference-time alignment.
\section{Related Work}

Preference learning has a long history in RL and bandit literature~\citep{akrour2012april,auer2014algorithmic,sadigh2017active}. Alongside the empirical success of training LLMs to generate outputs aligned with human preferences referred to as Reinforcement Learning with Human Feedback (RLHF), the theoretical aspects of preference learning have been extensively studied~\citep{chambers2021recovering,shah2016estimation,heckel2018approximate}. Reward-based methods have been shown to be unstable, and relative preferences are generally easier to obtain than absolute reward scores. Direct preference optimization (DPO) has emerged as a promising approach, addressing the inconsistencies of reward-based methods in reflecting true human preferences~\citep{knox2022models}. Several works characterize the alignment problem as a general preference optimization objective, which is the focus of our work. For instance, ~\citet{azar2024general} define a general RLHF objective, with RLHF and DPO as specific examples. In Contrastive Preference Learning (CPL)~\citep{hejna2023contrastive}, a novel approach models human preferences based on regret rather than reward, enabling scalable, off-policy learning without requiring reward functions or reinforcement learning. Additionally, ~\citet{ye2024online} proposes a general RLHF framework without assuming a reward function or Bradley-Terry preferences, using a reverse-KL minimax game between LLMs, and introduces sample-efficient algorithms for both offline and online learning with empirical validation. An extended version of the related work is provided in Appendix~\ref{app:related-work-extd}.
\section{Preliminaries}\label{sec:preliminaries}
We begin by introducing the finite-horizon Markov Decision Process (MDP) framework that formalizes the generation process of language modeling.

A finite-horizon MDP is formulated as $\mathcal{M} = (\mathcal{P}, \mathcal{S}, \mathcal{A}, T, \rho)$, where $\mathcal{S}$ is a finite set of states, $\mathcal{A}$ is a set of actions, $\mathcal{P}: \mathcal{S} \times \mathcal{A} \rightarrow \Delta(\mathcal{S})$ is the transition kernel, $T \in \mathbb{N}$ is the maximal length of the episode, and $\rho$ denotes the starting state distribution. A policy in MDPs, denoted by $\pi : \mathcal{S} \rightarrow \Delta(\mathcal{A})$, maps each state to a distribution over actions. The dynamics of the MDP can be described as follows: Initially, the starting state $s_0$ is sampled from the initial distribution $\rho$. At each step $t$, the policy selects an action $a_t$ based on the current state $s_t$. The process transitions to the next state $s_{t+1}$, which is sampled from the distribution $\mathcal{P}(\cdot \mid s_t, a_t)$. This process continues until a termination condition is met within the horizon length $T$.

\textbf{LLMs as Deterministic MDPs:} In text generation applications using an LLM, $\pi$ is the LLM's policy to choose the next token (action) given the current sequence of tokens (state). The transition kernel is deterministic; each new state is obtained by concatenating the previous tokens with the current token chosen by the policy. Each sentence generated by the policy is termed a trajectory $\tau = (s_0,a_0,s_1,a_1,\cdots)$, where $s_0 := x$ is the initial instruction sampled from the starting state distribution $\rho$, and $a_t \sim \pi(\cdot |s_t)$. Therefore, at each step $t$, the state $s_t = (x | a_{1:t-1})$ consists of the instruction $x$ and tokens generated up to $t-1$. The generation process ends when the LLM policy samples a special end-of-sequence token or the trajectory reaches a predefined length limit, characterized by $T$. The score of a trajectory generated by the LLM is defined using a reward function $r(s,a):\mathcal{S} \times \mathcal{A} \rightarrow \mathbb{R}$. The reward function is typically sparse with $r(s_t,a_t) = 0$ for all $t \in [T-2]$, quantifying the desirability of the final output of the LLM for the given instruction $x$. An LLM policy can be learned using $T$-horizon reinforcement learning by maximizing the expected reward for a given policy defined as $J(\pi) = \mathbb{E}_{\tau \sim \pi} [R(\tau)] = \mathbb{E}_{x \sim \rho}[V^{\pi}(x)]$, where $V^{\pi}(s) = \mathbb{E}_{\pi} \left[ \sum_{t=0}^{T-1} r(s_t,a_t) | s_0 = s \right]$ is the value of a state $s$ under policy $\pi$. Similarly, the state-action value function and the advantage function for a state-action pair $(s,a)$ are given by $Q^{\pi}(s,a) = \mathbb{E}_{\pi} \left[ \sum_{t=0}^{T-1} r(s_t,a_t) | s_0 = s, a_0 = a \right]$, and $A^{\pi}(s,a) = Q^{\pi}(s,a) - V^{\pi}(s)$ respectively.

%\textcolor{orange}{::This paragraph feels a little out of place, we could add 1 or two lines to give it more context::}
\textbf{RLHF/DPO:} RLHF and DPO are a few commonly used approaches to align LLMs with human preference data. The RLHF pipeline consists of three steps: a supervised fine-tuning phase where a pre-trained LLM is fine-tuned with supervised learning for downstream task(s) to obtain a model $\pi_{ref}$, a reward modeling phase to estimate a reward function for the LLMs generations, and a fine-tuning phase to find the optimal policy which maximizes the expected reward while not deviating too far from the reference model $\pi_{\mathrm{ref}}$.

\textbf{Reward Modeling:} The reference model $\pi_{\mathrm{ref}}$ is prompted with an instruction $x$ to obtain an answer pair $(y_1,y_2)$. The completions are presented to human annotators who express their preference for one answer over the other, denoted as $y_w \succ y_l | x$ where $y_w$ and $y_l$ denote the preferred% and dispreferred completions respectively.
answer. The preferences are assumed to be generated according to the Bradley-Terry model \citep{bradley1952rank}, given by
\begin{equation}
    \mathcal{P}^*(y_w \succ y_l) = \sigma(r^*(x,y_w) - r^*(x,y_l)).
\end{equation}
where $r^*(\cdot,\cdot)$ is a latent reward model unknown to the LLM. Assuming access to an offline dataset $\mathcal{D} = \{(x^{(i)},y_w^{(i)},y_l^{(i)})\}_{i=1}^n$ of preference information generated from $\mathcal{P}^*$, the reward model is parameterized as $r_{\theta}(x,y)$ and estimated via maximum likelihood. The estimation problem is formalized as a binary classification problem with the negative log-likelihood loss given by
\begin{equation}
    \mathcal{L}(r_{\theta},\mathcal{D}) = - \mathbb{E}_{(x,y_w,y_l) \sim \mathcal{D}} \left[ \log{\sigma(r_{\theta}(x,y_w) - r_{\theta}(x,y_l))}\right]
\end{equation}

\textbf{RL Fine-Tuning:} The reference model $\pi_{\mathrm{ref}}$ is fine-tuned with the learned reward function $r_{\theta}$, which is posed as the following KL-regularized constraint optimization problem:
\begin{equation}\label{eqn:kl-obj}
    \max_{\pi} \mathbb{E}_{x \sim \mathcal{D},\, y \sim \pi(y \mid x)} \left[ r_\theta(x, y) \right] - \eta \mathrm{KL} \left[ \pi(y \mid x) \,\|\, \pi_{\mathrm{ref}}(y \mid x) \right]
\end{equation}
where $\eta$ is a parameter that controls the deviation of the learned $\pi$ from the reference policy $\pi_{\mathrm{ref}}$, and the KL divergence is defined as $\mathrm{KL}(p || q) = \mathbb{E}_{z \sim p} \left[ \log{p(z)/q(z)}\right]$.

\textbf{DPO:} In \citet{rafailov2023direct}, the RL fine-tuning step is solved as a supervised learning problem to directly learn the optimal policy using preference information. The key idea is that the optimal policy, as obtained by solving the KL-constrained reward maximization objective, is a function of the implicit reward model and has the following form:
\begin{equation}
    \pi^*_r(y \mid x) = \frac{1}{Z(x)} \pi_{\mathrm{ref}}(y \mid x) \exp\left( \frac{1}{\eta} r^*(x, y) \right)
\end{equation}
where $Z(x) = \sum_y \pi_{\mathrm{ref}}(y|x) \exp(\frac{1}{\eta}r^*(x,y))$ is the partition function. Although estimation of the partition function is difficult, DPO circumvents this challenge by expressing the BT preference model as a function of the optimal policy $\pi^*$ given by
\begin{equation}
    \mathcal{P}^*(y_1 \succ y_2 \mid x) = \frac{1}{1 + \exp\left( \eta \log \frac{\pi^*(y_2 \mid x)}{\pi_{\mathrm{ref}}(y_2 \mid x)} - \eta \log \frac{\pi^*(y_1 \mid x)}{\pi_{\mathrm{ref}}(y_1 \mid x)} \right)}
\end{equation}
Similar to the reward modeling approach, under the above reformulation of the preference model, the policy objective is defined as:
%\textcolor{orange}{::should this be $\eta$ ::}
\begin{equation}
\mathcal{L}_{\text{DPO}}(\pi; \pi_{\text{ref}}) = - \mathbb{E}_{(x, y_w, y_l) \sim \mathcal{D}} \left[
\log \sigma \left(
\eta \log \frac{\pi(y_w \mid x)}{\pi_{\text{ref}}(y_w \mid x)} -
\eta \log \frac{\pi(y_l \mid x)}{\pi_{\text{ref}}(y_l \mid x)}
\right)
\right].
\end{equation}

\section{PITA: Preference-Guided Inference-Time Alignment} \label{sec:PITA}

In this section, we present our algorithm for test-time scaling of LLMs using a preference model. The key idea is to leverage the structure of the optimal policy under $\pi^*$ as a function of the state-action value function $Q^*$ in deterministic MDPs \citep{zhou2025q}. This framework applies specifically to LLM fine-tuning. To facilitate understanding, we present the relevant results.

For a given parameter $\eta > 0$, the soft-value function $V^{\pi,\eta}$ of a policy $\pi$ is defined as:
\begin{equation}\label{eqn:klreg-soft-obj}
    V^{\pi,\eta}(s) = \mathbb{E}_{\pi} \left[ \sum_{t=0}^{T-1} r(s_t, a_t) - \eta \, \mathrm{KL}(\pi(s_t) \parallel \pi_{\mathrm{ref}}(s_t)) \Big| s_0 = s \right]
\end{equation}
It can be shown that the KL-regularized RL objective is solvable using the soft Bellman equations. In particular, the optimal policy is given by \cite{rafailov2024r}:
\begin{align}\label{eqns:soft-bellman}
V_{T}^{\star, \eta}(s) &= 0, \nonumber\\
Q_t^{\star, \eta}(s, a) &= r(s, a) + \mathbb{E}_{s' \sim P(\cdot \mid s,a)}\left[ V_{t+1}^{\star, \eta}(s') \right], \nonumber\\
\pi^{\star, \eta}(a_t \mid s_t) &\propto \pi_{\text{ref}}(a_t \mid s_t) \exp\left( \eta^{-1} Q_t^{\star, \eta}(s_t, a_t) \right), \\
V_t^{\star, \eta}(s) &= \eta \ln \mathbb{E}_{a \sim \pi_{\text{ref}}(\cdot \mid s)} \left[ \exp\left( \eta^{-1} Q_t^{\star, \eta}(s, a) \right) \right] \nonumber.
\end{align}

Here, $Q^{\star, \eta}$ represents the soft-state-action value function under the optimal policy $\pi^*$. We specialize our discussion to the deterministic transition structure of the LLM generative process. In the sparse reward setting, $V^{\pi,\eta}, Q^{\pi,\eta}$ are functions of the conditional reward distributions of the generations under $\pi_{\mathrm{ref}}$ \citep{zhou2025q}. Specifically:
\begin{align}
V_{t}^{\star, \eta}(s) &= \eta \ln \mathbb{E}_{\pi_{\text{ref}}} \left[ \exp \left( \eta^{-1}  r(s_{T-1}, a_{T-1}) \right) \middle| s_t = s \right], \\
Q_{t}^{\star, \eta}(s,a) &= \eta \ln \mathbb{E}_{\pi_{\text{ref}}} \left[ \exp \left( \eta^{-1}  r(s_{T-1}, a_{T-1}) \right) \middle| s_t = s, a_t = a \right] \label{eqn:optimal-soft-q}.
\end{align}
For the sake of completeness, we present the proof of the above results in Appendix~\ref{app:theory}. The Q\# approach utilizes a policy of the form given in (\ref{eqns:soft-bellman}) with $Q_{t}^{\star, \eta}(s,a)$ as given in (\ref{eqn:optimal-soft-q}).

\textbf{Preference-based Q Value function:} Unlike Q\#, the goal of PITA is to directly learn an optimal LLM policy using preference data.

We first derive the soft-state-action value function and the value function in terms of the preference distribution. For a given context $x$, let $y^{\mathrm{ref}}_x$ be a unique \textit{reference} completion starting from state $x$. The preference distribution of any completion $y_x$ given $x$, sampled according to a policy $\pi$ over $y^{\mathrm{ref}}_x$, is given by $\mathcal{P}^*(y_x \succ y^{\mathrm{ref}}_x)$.
\begin{theorem}\label{thm:pref-Q}
    Let $\Psi(x) \triangleq \log{\frac{x}{(1-x)}}$. Then, assuming the BT preference model, we have
    \begin{align}
        V_{t}^{\star, \eta}(s) &= r(y^{\mathrm{ref}}_s) + \eta \ln \mathbb{E}_{y_s \sim \pi_{\text{ref}}} \left[ \exp \left( \eta^{-1}  \Psi( \mathcal{P}^*(y_s \succ y^{\mathrm{ref}}_s)) \right) \middle| s_t = s \right], \nonumber \\
Q_{t}^{\star, \eta}(s,a) &= r(y^{\mathrm{ref}}_s) + \eta \ln \mathbb{E}_{y_{s'} \sim \pi_{\text{ref}}} \left[ \exp \left( \eta^{-1} \Psi( \mathcal{P}^*(y_{s'} \succ y^{\mathrm{ref}}_s)) \right) \middle| s_t = s, a_t = a \right],
    \end{align}
where $s' = (s | a)$ is the state obtained by concatenating $a$ to state $s$.
\end{theorem}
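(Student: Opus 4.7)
The plan is to reduce the theorem to the closed-form log-sum-exp representation for $V^{\star,\eta}_t$ and $Q^{\star,\eta}_t$ already established in Equation~(\ref{eqn:optimal-soft-q}), and then to re-express the terminal reward $r(y_s)$ in terms of a preference probability against the fixed reference completion $y^{\mathrm{ref}}_s$. The algebra is short; the only substantive input is the structural identity $\Psi = \sigma^{-1}$.

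First I would record the key functional identity. Since $\Psi(x) = \log\!\frac{x}{1-x}$ is the logit, it is the inverse of the sigmoid, so $\Psi(\sigma(z)) = z$ for every $z$. Under the Bradley--Terry assumption (applied to the terminal reward $r$, which coincides with the latent reward driving preferences), we have
\begin{equation*}
\mathcal{P}^*(y_s \succ y^{\mathrm{ref}}_s) \;=\; \sigma\!\bigl(r(y_s) - r(y^{\mathrm{ref}}_s)\bigr),
\end{equation*}
and consequently $\Psi(\mathcal{P}^*(y_s \succ y^{\mathrm{ref}}_s)) = r(y_s) - r(y^{\mathrm{ref}}_s)$. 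Rearranging gives the substitution rule $r(y_s) = r(y^{\mathrm{ref}}_s) + \Psi(\mathcal{P}^*(y_s \succ y^{\mathrm{ref}}_s))$.

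Next I would plug this into Equation~(\ref{eqn:optimal-soft-q}). Because $y^{\mathrm{ref}}_s$ is a deterministic function of $s$, the scalar $r(y^{\mathrm{ref}}_s)$ is constant with respect to the conditional expectation $\mathbb{E}_{\pi_{\mathrm{ref}}}[\,\cdot \mid s_t = s]$. Hence the factor $\exp(\eta^{-1} r(y^{\mathrm{ref}}_s))$ factors out of the expectation, and applying $\eta \ln(\cdot)$ on the outside converts this multiplicative constant into the additive term $r(y^{\mathrm{ref}}_s)$. This yields exactly the claimed formula for $V^{\star,\eta}_t(s)$. The $Q$-function identity follows from the same substitution applied to the conditional version of (\ref{eqn:optimal-soft-q}), where the expectation is taken over trajectories $y_{s'}$ starting from the deterministic next state $s' = (s \mid a)$, with the reference completion still chosen to be $y^{\mathrm{ref}}_s$ so that the constant pulled out matches between $V$ and $Q$.

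I do not anticipate a genuine obstacle: the argument is a one-line substitution followed by pulling a constant out of a log-sum-exp. The only place to be careful is the modeling identification between the environment reward $r$ used in the sparse-reward Bellman representation and the latent BT reward $r^*$ governing $\mathcal{P}^*$; once these are taken to agree on terminal completions (which is implicit in the setup of Section~\ref{sec:preliminaries}), and once one notes that $y^{\mathrm{ref}}_s$ is a deterministic feature of $s$ and thus measurable with respect to the conditioning $\sigma$-algebra, the rest is immediate.
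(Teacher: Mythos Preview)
Your proposal is correct and essentially identical to the paper's own proof: both use the identity $\Psi=\sigma^{-1}$ together with the BT model to write $r(y_s)=r(y^{\mathrm{ref}}_s)+\Psi(\mathcal{P}^*(y_s\succ y^{\mathrm{ref}}_s))$, substitute this into the log-sum-exp representation of Equation~(\ref{eqn:optimal-soft-q}), and factor the constant $r(y^{\mathrm{ref}}_s)$ out of the expectation. The paper then remarks that the $Q$-case is analogous, exactly as you do.
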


The proof of Theorem~\ref{thm:pref-Q} is provided in Appendix~\ref{app:theory}. The above theorem shows that the expressions $Q_{t}^{\star, \eta}, V_{t}^{\star, \eta}$ are functionals of the preference distribution of completions according to the reference policy $\pi_{\mathrm{ref}}$ with an offset term. While the reward function $r^*$ is unknown, the optimal policy is a function of the advantage function, making it sufficient to evaluate the preference distribution. The reference completion for any intermediate state $s$ must be unique. In the rest of the discussion, we choose $y^{\mathrm{ref}}_s$ to be the completion obtained via greedy decoding using policy $\pi_{\mathrm{ref}}$.

\textbf{Preference distribution over greedy decoding:} The preference probability $\mathcal{P}^*(y_x \succ y^{\mathrm{ref}}_x)$ represents the expected number of `wins' of a completion over greedy decoding for a given context. Since the reference greedy completion $y_x^{\mathrm{ref}}$ is unique, using the reward-equivalence property of the Bradley-Terry model, we derive the following lemma.

\begin{lemma}\label{lem:reward-equivalence}
     Let $r^*$ be the implicit reward function. Then, for any given context $x$ and policy $\pi$, there exists a reward function $\tilde{r}$ such that the preference probability of generations $y_x \sim \pi$ over the greedy completion is given by $\sigma(\tilde{r}(y))$.
\end{lemma}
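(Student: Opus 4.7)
The plan is to unfold the Bradley–Terry form of $\mathcal{P}^*$ and observe that the second reward argument is a constant once the context $x$ is fixed, so the whole expression collapses into a sigmoid of a single function of $y$. Concretely, by assumption the preference probability satisfies
\begin{equation*}
\mathcal{P}^*(y_x \succ y^{\mathrm{ref}}_x) \;=\; \sigma\!\bigl(r^*(x, y_x) - r^*(x, y^{\mathrm{ref}}_x)\bigr).
\end{equation*}
Because $y^{\mathrm{ref}}_x$ is defined as the \emph{unique} greedy completion of $\pi_{\mathrm{ref}}$ starting from $x$, the quantity $r^*(x, y^{\mathrm{ref}}_x)$ depends only on $x$ and not on the completion $y_x$ being scored. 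I would then define
\begin{equation*}
\tilde{r}(y) \;\triangleq\; r^*(x, y) - r^*(x, y^{\mathrm{ref}}_x),
\end{equation*}
where $x$ is absorbed as a fixed parameter, and immediately obtain $\mathcal{P}^*(y_x \succ y^{\mathrm{ref}}_x) = \sigma(\tilde{r}(y_x))$, which is the claim.

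The only substantive point to justify is that $\tilde{r}$ is a legitimate reward function in the BT sense. For this I would invoke the standard reward–equivalence property of Bradley–Terry models: two reward functions that differ by a function of the context alone induce identical preference distributions, so $\tilde{r}$ and $r^*(x,\cdot)$ lie in the same equivalence class with respect to pairwise preferences rooted at $x$. This also makes it clear why the uniqueness of $y^{\mathrm{ref}}_x$ matters, since if the reference completion were random one would need to take an expectation, and the resulting shift would no longer be a simple additive constant on the logit scale.

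The main (and really only) obstacle is a modelling one rather than a technical one: one must be comfortable letting $\tilde{r}$ depend implicitly on the context $x$ through the choice of reference completion, and comfortable that the Bradley–Terry ``wins over $y^{\mathrm{ref}}_x$'' distribution inherits a logistic form even though the original BT model was written for arbitrary pairs. Once these two points are granted, the proof is a one-line algebraic rearrangement, and it slots directly into Theorem~\ref{thm:pref-Q} by letting us replace $\Psi(\mathcal{P}^*(y_s \succ y^{\mathrm{ref}}_s))$ with $\tilde{r}(y_s)$ inside the expectations defining $V_t^{\star,\eta}$ and $Q_t^{\star,\eta}$.
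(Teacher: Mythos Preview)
Your proposal is correct and follows essentially the same approach as the paper: define $\tilde{r}(y_x) \triangleq r^*(y_x) - r^*(y_x^{\pi})$, observe that the difference $\tilde{r}-r^*$ depends only on the context, and invoke the Bradley--Terry reward-equivalence result (the paper cites \citet{rafailov2023direct}, Lemma~1, for this step). The only cosmetic discrepancy is that the paper states the construction for the greedy completion $y_x^{\pi}$ under the arbitrary policy $\pi$ appearing in the lemma, whereas you wrote it for $y_x^{\mathrm{ref}}$; the argument is identical either way.
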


The proof of Lemma~\ref{lem:reward-equivalence} is given in Appendix~\ref{app:theory}. The above reward equivalence is crucial for developing our algorithm. It states that the preference probability is solely a function of the completion and context. We can parameterize $\mathcal{P}^{\theta}(y_x)$ using a neural network, which estimates the expected wins $\mathcal{P}^*(y_x \succ y^{\mathrm{ref}}_x)$ over greedy decoding. From the optimal policy derived from the soft-Bellman equations, we have:
\begin{equation}
\pi^{\star, \eta}(a \mid s) \propto \pi_{\text{ref}}(a \mid s)  \mathbb{E}_{y_{s'} \sim \pi_{\text{ref}}} \left[ \exp \left( \eta^{-1} \Psi( \mathcal{P}^*(y_{s'} \succ y^{\mathrm{ref}}_s)) \right) \middle| s_t = s, a_t = a \right]
\end{equation}
This naturally motivates us to consider parametrized policies of the form $\pi^{\theta}$, such as:
\begin{equation}
\pi^{\theta, \eta}(a \mid s) \propto \pi_{\text{ref}}(a \mid s)  \mathbb{E}_{y_{s'} \sim \pi_{\text{ref}}} \left[ \exp \left( \eta^{-1} \Psi( \mathcal{P}^{\theta}(y_{s'} \succ y^{\mathrm{ref}}_s)) \right) \middle| s_t = s, a_t = a \right]
\end{equation}
where $s' = (s | a)$ is the state obtained by concatenating $s$ with action $a$.

\subsection{Algorithm}
In this section, we present PITA, a preference-guided inference-time alignment algorithm for solving the KL-regularized RLHF objective. PITA is an iterative algorithm where progressively better policies are learned using a parametrized preference function. We describe the PITA algorithm in the context of LLMs, and in Section~\ref{sec:pita-convergence}, we provide convergence guarantees under linear reward approximation.

Note that the optimal policy is obtained by guiding generations from the reference policy $\pi_{\mathrm{ref}}$ using the regularized soft-action value function $Q^*$. However, evaluating $Q^*$ is challenging unless the distribution over preference functions for each generation under $\pi_{\mathrm{ref}}$ is known \textit{a priori}. Assuming a parametric family, one can apply distributional techniques, such as maximum likelihood estimation, for parameter inference. To address this, we adopt certainty equivalence as an approximation in our practical implementation.

Let $\hat{\cP}^{\theta}: \cX \rightarrow [0,1]$ be the preference function that maps a given state $x := (s,a) \in \cX$ to the expected preference under $\pi_{\mathrm{ref}}$, i.e. $\E_{y_x \sim \pi_{\text{ref}}}[\cP^*(y_x \succ y^{\mathrm{ref}}_s)]$. The natural loss function is the binary cross-entropy loss (BCE):
\begin{equation}
    \mathcal{L}(o_y,\hat{\cP}) = - o_y \log{\hat{\cP}} - (1 - o_y) \log{( 1 - \hat{\cP})}
\end{equation}
where $o_y = \mathbf{1}_{y \succ y^{\mathrm{ref}}}$ is the indicator function of the completion $y$ preferred over the reference completion $y^{\mathrm{ref}}$.

The algorithm iteratively updates the estimator parameters $\theta$ using new preference information collected by the induced policy $\pi_k \triangleq \pi^{\theta_k,\eta}$. Specifically, the data collection strategy follows a rollout step using $\pi^{k}$ for $t$ steps until a context $s_t$ is sampled. Then, a completion is randomly sampled starting from this context along with greedy decoding using $\pi_{\text{ref}}$. The preference observed between the two completions is a sample from $\cP^*$. These preference samples are added to the dataset, and parameters are updated via gradient descent. The procedure is repeated until convergence. The pseudocode for the algorithm is detailed in Algorithm~\ref{alg:PITA}.

\begin{algorithm}[H]
\caption{Preference-Guided Inference-Time Alignment (PITA)}
\label{alg:PITA}
\begin{algorithmic}[1]
\State \textbf{Input:} Reference policy $\pi^{\text{ref}}$. Initialize $\theta_0$ and dataset $\mathcal{D} = \emptyset$.
\For{$k = 1, 2, \dots$ until convergence}
    \State Let $\pi^{k-1} \leftarrow \pi^{{\theta_{k-1}},\eta}$ be the policy induced by $\hat{\cP}^{\theta_{k-1}}$.
    \For{$i = 1, 2, \dots, N$}
        \State Sample a rollout step uniformly: $t \sim \mathrm{Unif}([T])$.
        \State Roll in with $\pi^{k-1}$ for $t$ steps to obtain the context $s_t$.
        \State Sample the reference response $y^{\mathrm{ref}}_{s_t}$ starting from context $s_t$ using greedy decoding.
        \State Sample $M$ responses $y_{s_t}^1, \dots, y_{s_t}^M$ with $\pi^{\text{ref}}$ starting from $s_t$.
        \State Add $(s_t, y_{s_t}^i, y^{\mathrm{ref}}_{s_t}, \mathbf{1}_{y_i \succ y^{\mathrm{ref}}_{s_t}})$ to $\mathcal{D}$ for all $i \in [M]$.
    \EndFor
    \State Update $\theta^k$ using Maximum Likelihood Estimation on the aggregated data:
    \[
    \theta^{k} \leftarrow \arg\min_{\theta} \mathbb{E}_{(x,y,y') \sim \mathcal{D}} \left[ \mathcal{L}(\mathbf{1}_{y \succ y'}, \hat{\cP}^{\theta}) \right]
    \]
\EndFor
\State \textbf{Output:} Final $\theta^k$.
\end{algorithmic}
\end{algorithm}

We note that our algorithm belongs to the class of value-based algorithms for post-training LLMs \citep{mudgal2023controlled,han2024value,zhou2025q}. In methods like CD \citep{mudgal2023controlled} and VAS \citep{han2024value}, the learned Q-function is a non-regularized version of the optimal $Q^*$. In contrast, Q\# \citep{zhou2025q} aims to learn the optimal state-action value function using distributional reinforcement learning \citep{bellemare2023distributional}.  Our iterative algorithm shares design principles with Q\#, but it learns the optimal action-value function directly from preference data, eliminating the need for a pre-trained reward model.
\section{Theoretical Results on PITA Performance}\label{sec:pita-convergence}

In this section, we present theoretical performance guarantees for PITA.  We focus on the case of a linear reward class.  All proofs appear in Appendix~\ref{app:convergence-proofs}.
\begin{assumption}\label{assum:theta_bound}
The reward lies in the family of linear functions $r_{\theta}(y) = \theta^\top \Phi(y)$ for some known $\Phi$ with $\sup \|\Phi(\cdot)\|_2 \leq L$. Let $\theta^\star$ be the true parameter. To ensure the identifiability of $\theta^\star$, we assume $\theta^\star \in \Theta_B$, where
\[
\Theta_B = \left\{ \theta \in \mathbb{R}^d \mid \langle \mathbf{1}, \theta \rangle = 0, \|\theta\|_2 \leq B \right\}.
\]
\end{assumption}

\subsection{Preference Modeling in PITA}
We denote by $\Phi(x)$ the $d$-dimensional embedding of a state $x$, where $\Phi : \cX \rightarrow \R^d$. The embedding function is assumed to be known to the learner. In the context of LLMs, $\Phi$ is obtained from the hidden representation of the second-to-last layer of the model. The set of all completions starting from a given state $x$ is denoted by $\cY^{\pi}_x = \left\{y: y \sim \pi(\cdot | x)\right\}$. Thus, the preference distribution $\cP^*(y_x)$ for a completion $y_x \in \cY^{\pi}_x$ is given by
\begin{equation}
    \cP^*(y_x) = \cP^*(y_x \succ y_x^{\pi}) = \sigma(\langle \theta^*, \Phi(y_x) \rangle - \langle \theta^*, \Phi(y_x^{\pi}) \rangle),
\end{equation}
where $y_x^{\pi}$ is the greedy completion under $\pi$. Since the greedy completion is unique for a fixed policy $\pi$ and starting context $x$, we can rewrite the parametric class of preference distributions for PITA as:
\begin{equation}
    \cP^{\theta}(y_x) = \sigma(\langle \theta, \Phi(y_x) - \Phi(y_x^{\pi_{\mathrm{ref}}}) \rangle) 
\end{equation}
Since $y_x^{\pi_{\mathrm{ref}}}$ is fixed, we can, without loss of generality, renormalize $\Phi(y_x^{\pi_{\mathrm{ref}}})$ to 0, as all preference data for a given $x$ is measured relative to $y_x^{\pi_{\mathrm{ref}}}$. Thus, we have:
\begin{equation}
    \cP^{\theta}(y_x) = \sigma(\langle \theta, \Phi(y_x) \rangle).
\end{equation}
Here, $\theta \in \Theta_B$, and $y_x \in \cY^{\pi_{\mathrm{ref}}}_x$. In the following, we omit the context $x$ from our notation, as the results apply on a per-context basis.

\textbf{Maximum Likelihood Estimation (MLE).}  
In the above preference modeling, a natural way to compute an estimator $\theta_k$ given completion pairs $\{(x_i, y_i, y^{\mathrm{ref}}_i)\}_{i=1}^n$ and their preference feedback values $\{o_i\}_{i=1}^n$ is via maximum likelihood estimation (MLE). MLE minimizes the negative log-likelihood function, defined as:
\begin{align}
\hat{\theta} &\in \arg \min_{\theta \in \Theta_B} \mathcal{L}_{\mathcal{D}}(\theta), \\
\mathcal{L}_{\mathcal{D}}(\theta)
&= - \sum_{i=1}^n o_i \log{\sigma(\langle \theta, \Phi(y_i) \rangle)} + 
(1 - o_i) \cdot \log{1 - \sigma(\langle \theta, \Phi(y_i) \rangle)}.
\end{align}

Following results from dueling bandits and reinforcement learning \citep{faury2020improved, shah2016estimation, pacchiano2021dueling}, we have the following result on convergence of the MLE. %MLE to the true parameter.
\begin{lemma}\label{eqn:mle}
Under Assumption~\ref{assum:theta_bound}, for any $\lambda > 0$, with probability at least $1 - \delta$, we have
\begin{equation}
\left\|\hat{\theta} - \theta^*\right\|_{\Sigma_{\mathcal{D}} + \lambda I} \leq C \cdot \sqrt{\frac{d + \log(1/\delta)}{\gamma^2 n} + \lambda B^2} .
\end{equation}
where $\Sigma_{\mathcal{D}} = \frac{1}{n} \sum_{i=1}^n \Phi(y_i) \Phi(y_i)^\top$, 
$\gamma = \frac{1}{2 + \exp(-LB) + \exp(LB)}.$
\end{lemma}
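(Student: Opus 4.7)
The plan is to adapt the standard constrained-MLE analysis for logistic/Bradley--Terry models (as cited above) and reduce the statement to a concentration inequality on $\nabla \mathcal{L}_\mathcal{D}(\theta^\star)$ in a design-weighted norm. First, I compute
\[
\nabla \mathcal{L}_\mathcal{D}(\theta) = \sum_{i=1}^n \bigl(\sigma(\langle\theta,\Phi(y_i)\rangle)-o_i\bigr)\Phi(y_i), \qquad \nabla^2 \mathcal{L}_\mathcal{D}(\theta) = \sum_{i=1}^n \sigma'(\langle\theta,\Phi(y_i)\rangle)\,\Phi(y_i)\Phi(y_i)^{\top},
\]
and observe that for every $\theta\in\Theta_B$ and every $y$, $|\langle\theta,\Phi(y)\rangle|\leq BL$, so $\sigma'(\cdot)\geq\gamma$ pointwise. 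This yields the uniform Hessian lower bound $\nabla^2\mathcal{L}_\mathcal{D}(\theta)\succeq\gamma n\Sigma_\mathcal{D}$ on the entire constraint set.

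Second, I would reduce the error to a gradient quantity. By the mean value theorem, there is $\bar\theta\in[\theta^\star,\hat\theta]$ with $\nabla\mathcal{L}_\mathcal{D}(\hat\theta)-\nabla\mathcal{L}_\mathcal{D}(\theta^\star)=H(\hat\theta-\theta^\star)$ for $H:=\nabla^2\mathcal{L}_\mathcal{D}(\bar\theta)$. Since $\hat\theta$ minimizes $\mathcal{L}_\mathcal{D}$ over $\Theta_B$ and $\theta^\star\in\Theta_B$, first-order optimality gives $\langle\nabla\mathcal{L}_\mathcal{D}(\hat\theta),\hat\theta-\theta^\star\rangle\leq 0$. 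Substituting the MVT identity, rearranging, and applying Cauchy--Schwarz in the $H$-norm yields $\|\hat\theta-\theta^\star\|_H\leq\|\nabla\mathcal{L}_\mathcal{D}(\theta^\star)\|_{H^{-1}}$. Combining with $H\succeq\gamma n\Sigma_\mathcal{D}$ (so $\|v\|_H^2\geq\gamma n\|v\|_{\Sigma_\mathcal{D}}^2$ and $\|g\|_{H^{-1}}^2\leq(\gamma n)^{-1}\|g\|_{\Sigma_\mathcal{D}^{-1}}^2$) gives
\[
(\gamma n)^2\,\|\hat\theta-\theta^\star\|_{\Sigma_\mathcal{D}}^2 \;\leq\; \|\nabla\mathcal{L}_\mathcal{D}(\theta^\star)\|_{\Sigma_\mathcal{D}^{-1}}^2,
\]
interpreted on the range of $\Sigma_\mathcal{D}$ (a vanishing ridge removes any invertibility concern).

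Third, I would concentrate $\nabla\mathcal{L}_\mathcal{D}(\theta^\star)=\sum_i\epsilon_i\Phi(y_i)$, where $\epsilon_i:=\sigma(\langle\theta^\star,\Phi(y_i)\rangle)-o_i$ is conditionally mean-zero, independent across $i$, and bounded in $[-1,1]$ (hence sub-Gaussian). Letting $X$ be the $n\times d$ feature matrix and $A:=(n\Sigma_\mathcal{D})^{-1/2}X^{\top}$, one checks $AA^{\top}=I_d$, so $\|A\|_F^2=d$ and $\|A\|_{\mathrm{op}}=1$. A Hanson--Wright inequality for sub-Gaussian quadratic forms then yields, with probability at least $1-\delta$,
\[
\|\nabla\mathcal{L}_\mathcal{D}(\theta^\star)\|_{(n\Sigma_\mathcal{D})^{-1}}^2 \;=\; \|A\epsilon\|_2^2 \;\leq\; C_0\bigl(d+\log(1/\delta)\bigr),
\]
equivalently $\|\nabla\mathcal{L}_\mathcal{D}(\theta^\star)\|_{\Sigma_\mathcal{D}^{-1}}^2\leq C_0 n(d+\log(1/\delta))$. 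Substituting into the previous display gives $\|\hat\theta-\theta^\star\|_{\Sigma_\mathcal{D}}^2 \leq C_0(d+\log(1/\delta))/(\gamma^2 n)$.

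Finally, I absorb the extra ridge. Since $\hat\theta,\theta^\star\in\Theta_B$ we have $\|\hat\theta-\theta^\star\|_2\leq 2B$, hence
\[
\|\hat\theta-\theta^\star\|_{\Sigma_\mathcal{D}+\lambda I}^2 = \|\hat\theta-\theta^\star\|_{\Sigma_\mathcal{D}}^2 + \lambda\|\hat\theta-\theta^\star\|_2^2 \;\leq\; \frac{C_0\bigl(d+\log(1/\delta)\bigr)}{\gamma^2 n} + 4\lambda B^2,
\]
and taking a square root produces the stated bound. The main obstacle is the concentration step: the gradient must be bounded in a data-dependent Mahalanobis norm, which rules out a naive union bound (which would incur $\log n$ or covering-number factors) and instead requires a Hanson--Wright- or self-normalized-Bernstein-style inequality carefully tuned to produce the additive $d+\log(1/\delta)$ scaling. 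All remaining ingredients --- the uniform Hessian lower bound, the convex-analytic reduction, and the $\lambda B^2$ add-on --- are routine once this concentration is secured.
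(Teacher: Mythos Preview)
Your argument is correct and follows essentially the same route as the paper's proof (which itself follows \cite{zhu2023principled}): a uniform Hessian lower bound $\nabla^2\mathcal{L}_{\mathcal{D}}\succeq \gamma n\Sigma_{\mathcal{D}}$ on $\Theta_B$, a convex-analytic reduction to $\|\nabla\mathcal{L}_{\mathcal{D}}(\theta^\star)\|$ in a design-weighted dual norm, a sub-Gaussian quadratic-form concentration (the paper cites \cite{hsu2012tail}, which gives exactly the $d+\log(1/\delta)$ scaling you obtain via Hanson--Wright on the projection $X(X^\top X)^{-1}X^\top$), and a final $4\lambda B^2$ ridge add-on from $\|\hat\theta-\theta^\star\|_2\leq 2B$. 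The only cosmetic differences are that the paper reaches the reduction via strong convexity plus $\ell_{\mathcal{D}}(\hat\theta)\leq\ell_{\mathcal{D}}(\theta^\star)$ (second-order Taylor) rather than your gradient-MVT plus first-order KKT, and it inserts the ridge in the H\"older step and solves a quadratic rather than appending it at the end. One minor technical point: the mean value theorem does not give a single $\bar\theta$ with $\nabla\mathcal{L}_{\mathcal{D}}(\hat\theta)-\nabla\mathcal{L}_{\mathcal{D}}(\theta^\star)=\nabla^2\mathcal{L}_{\mathcal{D}}(\bar\theta)(\hat\theta-\theta^\star)$ when $d>1$; you should instead take $H=\int_0^1\nabla^2\mathcal{L}_{\mathcal{D}}(\theta^\star+t(\hat\theta-\theta^\star))\,dt$, which still satisfies $H\succeq\gamma n\Sigma_{\mathcal{D}}$ and leaves the rest of your argument unchanged.
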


\subsection{Regret Bound}
We now state our main PAC result. We make the following assumption on the boundedness of the embedding function.
\begin{assumption}\label{assum:emb-norm-bound}
Consider the space of all policies of the form 
$\Pi_{\theta} \triangleq \{ \pi_{\theta}: \pi_{\theta} \propto \pi_{\mathrm{ref}} \exp{(\eta^{-1} \Phi)} \}$.
We assume $\left\|(\Sigma_{\mathcal{D}} + \lambda I)^{-1/2} \mathbb{E}_{y \sim \pi} [\Phi(y)] \right\|_2$ is bounded for all $\pi \in \Pi_{\theta}.$ 
\end{assumption}

Then we have the following regret bound on the performance of PITA.
\begin{theorem}\label{thm:pita-regret-bound}
For any $\lambda > 0$, with probability at least $1 - \delta$,
\begin{equation}
\sum_{k=1}^K (V^{\star,\eta} - V^{\theta_k,\eta}) \leq C \sum_{k=1}^K  \cdot \left( \sqrt{\frac{d + \log(1/\delta)}{\gamma^2 n_k} + \lambda B^2}  \cdot \sup \left\|(\Sigma_{\mathcal{D}} + \lambda I)^{-1/2} \mathbb{E}_{y \sim \pi} [\Phi(y)] \right\|_2 \right),
\end{equation}
where $n_k$ is the number of preference samples collected up to iteration $k$, and $\pi \in \Pi_{\theta}$.
\end{theorem}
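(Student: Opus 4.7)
The plan is to bound each round's suboptimality $V^{\star,\eta} - V^{\theta_k,\eta}$ by the product of (i) the MLE parameter error $\|\theta^\star - \theta_k\|_{\Sigma_{\cD}+\lambda I}$ and (ii) a feature-coverage term $\|(\Sigma_{\cD}+\lambda I)^{-1/2}\E_{y\sim\pi}[\Phi(y)]\|_2$, then instantiate Lemma~\ref{eqn:mle} per round and sum over $k$. The overall structure mirrors the standard LinUCB-type analysis in the linear contextual bandit literature, adapted to the preference-based soft-MDP setting of Section~\ref{sec:PITA}.

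First I would specialize Theorem~\ref{thm:pref-Q} to the linear preference class $\Psi(\cP^{\theta}(y)) = \langle \theta, \Phi(y)\rangle$ to obtain a closed form for both $V^{\star,\eta}$ and $V^{\theta_k,\eta}$. The reference-completion offset $r(y^{\mathrm{ref}}_s)$ cancels in the difference, and combining the per-step softmax in~\eqref{eqns:soft-bellman} with the sparse-reward, deterministic-transition telescoping shows that the completion distribution induced by $\pi^{\theta_k,\eta}$ has the form $\pi_{\mathrm{ref}}(y\mid x)\exp(\eta^{-1}\langle \theta_k, \Phi(y)\rangle)/Z_k(x)$. A change-of-measure argument (tilting $\pi_{\mathrm{ref}}$ by $\exp(\eta^{-1}\langle\theta_k,\Phi\rangle)$ recovers exactly this distribution) converts the log-partition gap into
\begin{equation*}
V^{\star,\eta}(x) - V^{\theta_k,\eta}(x) \;=\; \eta \log \E_{y \sim \pi^{\theta_k,\eta}}\!\bigl[\exp\bigl(\eta^{-1} \langle \theta^\star - \theta_k,\,\Phi(y)\rangle\bigr) \bigm| x\bigr].
\end{equation*}
Since $\|\Phi\|_2 \leq L$ and $\theta^\star, \theta_k \in \Theta_B$, the exponent is uniformly bounded in $[-2BL/\eta,\,2BL/\eta]$, so the Lipschitz property of $z \mapsto \eta \log \E[\exp(\eta^{-1}z)]$ on a compact interval yields the linearization
\begin{equation*}
V^{\star,\eta} - V^{\theta_k,\eta} \;\leq\; \bigl|\langle \theta^\star - \theta_k,\;\E_{y\sim\pi^{\theta_k,\eta}}[\Phi(y)]\rangle\bigr| + R_k,
\end{equation*}
where the higher-order remainder $R_k$ is controlled by a constant multiple of the leading term via Assumption~\ref{assum:theta_bound}.

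Next, Cauchy--Schwarz in the $(\Sigma_{\cD}+\lambda I)$-weighted inner product gives
\begin{equation*}
\bigl|\langle \theta^\star - \theta_k,\, \E_{y\sim\pi^{\theta_k,\eta}}[\Phi(y)]\rangle\bigr| \;\leq\; \|\theta^\star - \theta_k\|_{\Sigma_{\cD} + \lambda I}\, \bigl\|(\Sigma_{\cD}+\lambda I)^{-1/2} \E_{y\sim\pi^{\theta_k,\eta}}[\Phi(y)]\bigr\|_2.
\end{equation*}
Lemma~\ref{eqn:mle}, invoked at round $k$ with confidence $\delta/K$ (a union bound over the $K$ rounds, absorbed into $C$), bounds the first factor by $C\sqrt{(d + \log(1/\delta))/(\gamma^2 n_k) + \lambda B^2}$, where $n_k$ is the cumulative sample count. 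Assumption~\ref{assum:emb-norm-bound} guarantees the second factor is finite and bounded by $\sup_{\pi \in \Pi_\theta}\|(\Sigma_{\cD}+\lambda I)^{-1/2}\E_{y\sim\pi}[\Phi(y)]\|_2$. Summing over $k = 1,\dots,K$ yields the claimed inequality.

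The main obstacle is the linearization step: $V^{\star,\eta} - V^{\theta_k,\eta}$ is intrinsically a log-moment generating function rather than a linear functional of $\theta^\star - \theta_k$, and the remainder $R_k$ must be shown to be of the same (or lower) order as the leading term. This uses the uniform bound $|\langle \theta^\star - \theta_k, \Phi(y)\rangle| \leq 2BL$ together with the smoothness of the cumulant generating function on compact intervals, so that $R_k$ absorbs into the constant $C$. A secondary technical point is that $\cD$ is collected adaptively across rounds under the successive policies $\pi^{\theta_{k-1}}$, so the MLE concentration of Lemma~\ref{eqn:mle} must be invoked in a martingale-aware fashion (as is standard in adaptive preference-learning analyses, cf.\ \citep{pacchiano2021dueling}), with $\Sigma_{\cD}$ interpreted as the cumulative feature covariance up to round $k$.
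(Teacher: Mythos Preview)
Your proposal is essentially correct and follows the same skeleton as the paper's proof: express $V^{\theta,\eta}$ in closed form as a log-partition function linear in $\theta$ through the exponent, convert the value gap into an inner product $\langle \theta^\star-\theta_k,\cdot\rangle$, apply Cauchy--Schwarz in the $(\Sigma_{\cD}+\lambda I)$ geometry, and invoke Lemma~\ref{eqn:mle} together with Assumption~\ref{assum:emb-norm-bound}.

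The one substantive difference is in how you linearize. The paper applies the mean value theorem \emph{directly} to the map $\theta \mapsto \eta\ln\E_{\pi_{\mathrm{ref}}}[\exp(\eta^{-1}\langle\theta,\Phi(y)\rangle)]$, obtaining the exact identity
\[
V^{\star,\eta}(x)-V^{\theta_k,\eta}(x)=\bigl\langle\theta^\star-\theta_k,\;\eta^{-1}\E_{y\sim\pi_{\bar\theta}}[\Phi(y)]\bigr\rangle
\]
for some $\bar\theta$ on the segment $[\theta_k,\theta^\star]$ (using that the gradient of the log-partition is the tilted mean, Equation~\eqref{EQ:GRADLN}). Since $\bar\theta\in\Theta_B$ by convexity, $\pi_{\bar\theta}\in\Pi_\theta$ and Assumption~\ref{assum:emb-norm-bound} applies immediately---no remainder term ever appears. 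Your route first changes measure to $\pi^{\theta_k,\eta}$ (the identity $V^{\star,\eta}-V^{\theta_k,\eta}=\eta\log\E_{\pi^{\theta_k,\eta}}[\exp(\eta^{-1}\langle\theta^\star-\theta_k,\Phi\rangle)]$ is correct and elegant), but then Taylor-expands the resulting cumulant generating function around zero, producing a remainder $R_k$ you must separately control. Your justification that $R_k$ ``is controlled by a constant multiple of the leading term'' is loose: the remainder is second-order in $\|\theta^\star-\theta_k\|$, not proportional to the first-order term, and you rely implicitly on the crude bound $\|\theta^\star-\theta_k\|\le 2B$ to collapse the orders. This works, but had you applied the mean value theorem to your CGF in $\Delta=\theta^\star-\theta_k$ instead of Taylor-with-remainder, you would land exactly on the paper's expression with $\pi_{\bar\theta}$ in place of $\pi^{\theta_k,\eta}$ and no residual to discuss. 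Your additional remarks on the union bound over rounds and the adaptive nature of $\cD$ are more careful than the paper, which leaves both implicit.
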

Note that Theorem~\ref{thm:pita-regret-bound} demonstrates sub-linear regret with respect to the number of samples used for maximum likelihood estimation. In~\cite{zhu2023principled}, the authors show the sub-optimality of policies estimated using a pessimistic MLE algorithm under the same linear reward model as ours. While standard concentration bounds for MLE estimators in linear reward settings are well-established, our convergence analysis follows a substantially different approach. The key insight is that the optimal soft-action value function $Q^*$ can be expressed as a simple functional of preference data generated by a fixed reference policy. This transformation makes the analysis significantly more straightforward.

%We highlight that Theorem~\ref{thm:pita-regret-bound} demonstrates sub-linear regret in the number of samples used for maximum likelihood estimation. In~\cite{zhu2023principled}, the authors show sub-optimality of policies estimated using a pessimistic MLE algorithm under the same linear reward model as ours. While standard concentration bounds for MLE estimators in linear reward settings are well-established, our convergence analysis takes a substantially different approach. The key insight lies in the observation that the optimal soft-action value function $Q^*$ can be expressed as a simple functional of preference data generated by a fixed reference policy. Under this transformation, the analysis becomes significantly more straightforward.
\section{Experimental Evaluation}\label{sec:experiments}

We empirically evaluate the proposed \textbf{PITA} algorithm for mathematical reasoning tasks, sentiment generation, and compare it against a strong reward-model baseline.  Our study focuses on three key questions: \textbf{(Q1)} Does PITA improve task accuracy (\textit{pass@1} and \textit{maj@8}) over a baseline using post-training Q\# with a learned preference reward model? \textbf{(Q2)} Does the KL divergence to the reference policy remain controlled? \textbf{(Q3)} How well does the learned reward model discriminate between correct and incorrect generations?

%We adopt the experimental setup from Q\#, as it serves as one of the benchmarks against which we compare the performance of PITA. To ensure consistency, we use the same base models and datasets. 

In this section, we provide an overview of the experimental results, while additional details on datasets, training, and evaluation criteria are provided in Appendix~\ref{app:experiments}. All experiments were run on machines equipped with a single NVIDIA A100 80~GB GPU. Our code is available at \url{https://github.com/SaratBobbili/pita}.

\subsection{Experimental Setup}\label{subsec:setup}

\textbf{Models and Datasets.} 
We perform PITA training using the following models: Meta-Llama-3-8B-Instruct, Meta-Llama-3.2-1B-Instruct, TinyLlama-1.1B-Chat-v1.0, and GPT-2 small and medium models.  
We evaluate our algorithm on three diverse tasks with varying difficulty levels: arithmetic reasoning (GSM8K~\cite{cobbe2021training}), star-graph reasoning~\cite{bachmann2024pitfalls}, and sentiment generation~\cite{chaudhary2025riskaversefinetuninglargelanguage}.  
We use the Llama 3~\cite{grattafiori2024llama} family of models to train our preference function, as they are highly competitive in arithmetic reasoning tasks~\cite{zhou2025q}. The GPT-2 small and medium models are used for training on the star-graph reasoning task, following the framework in~\cite{bachmann2024pitfalls}. For sentiment generation, we use the TinyLlama 1.1B model, following~\cite{chaudhary2025riskaversefinetuninglargelanguage}. 

\textbf{Metrics.} We report \emph{pass@1} (exact-match accuracy of the highest-ranking sample), \emph{maj@8} (majority-vote accuracy over 8 samples), and the forward KL divergence $\text{KL}(\pi\,\Vert\,\pi_{\text{ref}})$ computed on the \textsc{GSM8K} test set. Lower KL values indicate closer adherence to the reference policy.

\subsection{Algorithms}

We compare the performance of the following algorithms against the base reference model $\pi_{\mathrm{ref}}$:
\begin{itemize}[leftmargin=1.4em]
    \item \textbf{Q\#}: The algorithm described in~\cite{zhou2025q}, trained directly using access to true binary reward labels.  
    \item \textbf{Q\#-HF}: A {\em two-stage} procedure in which a reward model is first trained from preference pairs (so-called ``human feedback''), and this learned reward is then used in the Q\# algorithm to obtain the final policy.  
    \item \textbf{PITA (ours)}: The algorithm described in Section~4, which learns a soft \(Q\)-function directly from preference data and induces the final policy without requiring a separate reward model.
\end{itemize}

\subsection{Performance on GSM8K}\label{subsec:main-results}

We present our results for the performance of our algorithm on the arithmetic reasoning dataset GSM8K~\cite{cobbe2021training}. GSM8K (Grade School Math 8K) is a dataset containing 8.5K high-quality, linguistically diverse math word problems aimed at the grade-school level. These problems, designed for question-answering tasks, require 2 to 8 steps and involve basic arithmetic operations. For test set evaluation, we use the entire GSM8K test set to evaluate the performance of the trained PITA model~\cite{lightman2023let,wang2023math}.

\textbf{Reward-Model Calibration in Q\#-HF:} We first train the reward model needed by Q\#-HF  as an intermediate step before using the Q\# algorithm to obtain a policy.  We utilize the full preference data set in order to train this reward model.    Figure~\ref{fig:reward_hist_correct} contrasts reward scores assigned by the BT model to correct versus incorrect generations produced by the baseline. A larger separation indicates stronger alignment between the reward and task success. The performance of Q\#-HF is highly sensitive to the reward training process, as we see in an ablation study of reward training from preference data presented in Appendix~\ref{app:additonal-experiments}.

\begin{figure}[H]
    \centering
    \includegraphics[width=\linewidth]{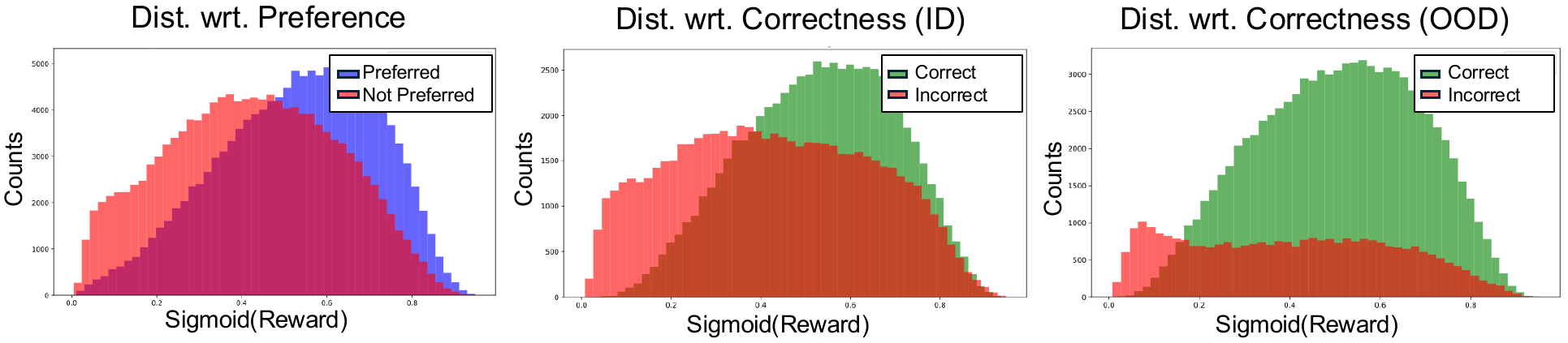}
    \caption{Distribution of learned reward scores w.r.t. preferred/correct (green) and not preferred/incorrect (red) for ID (in-dist.)/OOD (out-of-dist.) data. The reward function training is brittle and sensitive to the amount of data used.}
    \label{fig:reward_hist_correct}
\end{figure}

\textbf{Main Results:} In Table~\ref{tab:gsm8k_main}, we compare the performance of PITA and other value-guided algorithms to the reference model $\pi_{\mathrm{ref}}$, which is chosen to be the Llama 3 8B model. We observe that PITA significantly outperforms Llama 3 on both \textbf{pass@1} accuracy and \textbf{maj@8} evaluation metrics. Additionally, the performance gap between PITA and Q\# appears to be small. Training details for PITA are provided in Appendix~\ref{app:experiments}.  Thus, Table~\ref{tab:gsm8k_main} shows that PITA attains performance on par with Q\# even without access to true reward labels.

 Despite using a learned reward model with full availability of preference data, Q\#-HF fails to perform as well as PITA, highlighting the brittleness of reward estimation in this setting.% This suggests that PITA's preference-based approach is more robust and effective in aligning model outputs with desired preferences.

 Additionally, the KL divergence of PITA is higher than that of Q\# and Q\#-HF. This is because the language model used to provide preference feedback in PITA is OpenMathInstruct-2~\cite{toshniwal2024openmathinstruct}, which is distinct from the rule-based reward function score used in Q\#. This highlights the tradeoff between alignment with the reference policy and the flexibility of preference-based feedback.

\begin{table}[H]
  \centering
  \caption{GSM8K accuracy (\%) and KL divergence.}
  \label{tab:gsm8k_main}
  \begingroup
  \setlength{\tabcolsep}{6pt}
  \renewcommand{\arraystretch}{1.2}
  \begin{tabular}{lccc}
    \toprule
    \textbf{Algorithm} & \textbf{pass@1} & \textbf{maj1@8} & \textbf{KL}↓ \\
    \midrule
    $\pi_{\text{ref}}$  & 69.1 & 85.8 & - \\
    Q\#                 & 78.4 & 88.1 & 2.65 \\
    Q\#-HF              & 67.23 & 78.09 & 2.48 \\
    \midrule
    \textbf{PITA (ours)}& 77.11 & 86.20 & 6.15 \\
    \bottomrule
  \end{tabular}
  \endgroup
\end{table}

\textbf{Discussion:}   We observe that PITA significantly outperforms the Q\#-HF baseline on both accuracy metrics. The poor performance of Q\#-HF can be attributed to inconsistencies in the reward estimation process, which failed to provide stable guidance during training. This demonstrates that iterative policy optimization with sparse rewards, as used in PITA, is more sample-efficient than single-stage MLE fine-tuning. PITA’s approach not only improves task performance but also reduces the need for large amounts of labeled preference data, making it more computationally efficient. Furthermore, PITA's ability to maintain alignment with the reference policy while improving accuracy highlights its robustness in real-world applications. Additional insights are presented in Appendix~\ref{app:experiments}.

\subsection{Performance on Star-Graph Reasoning}

To better understand the effectiveness of our method in guiding a reference model at test time, we analyze the behavior of \textsc{PITA} alongside other value-guided algorithms on star-graph reasoning tasks~\citep{bachmann2024pitfalls}.  

A star-graph \(\mathcal{G}(d, l)\) is defined as a tree with \(d\) paths of length \(l\) emanating from a common root node. The task is for the language model (LM) to generate the correct path from the root to a leaf, given the edge set defining the graph. Figure~\ref{fig:star-graph} illustrates the star-graph task.

\begin{figure}[ht]
  \centering
  \includegraphics[width=0.9\textwidth]{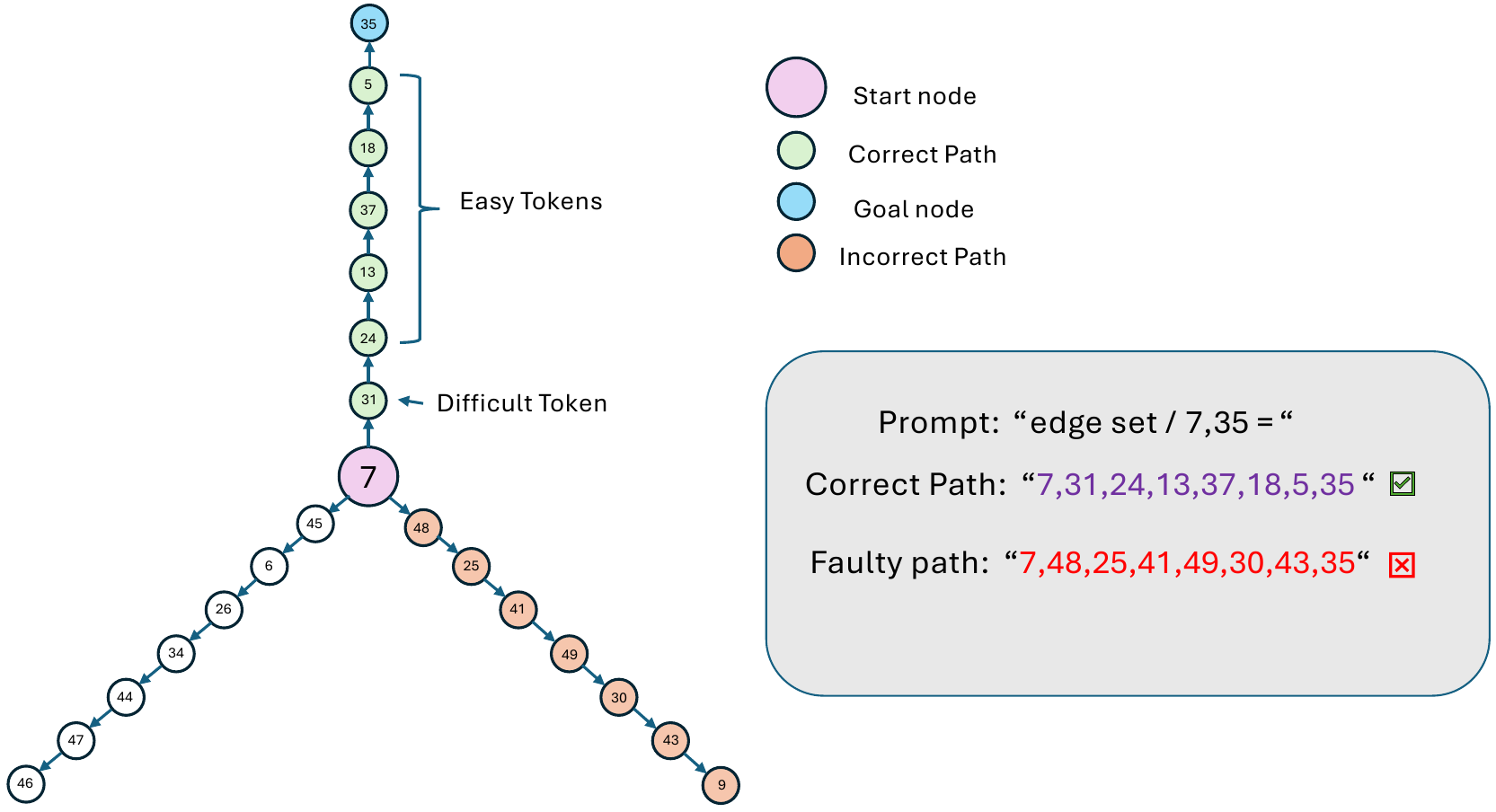}
  \caption{A \(\mathcal{G}= (3,8)\) star-graph configuration with degree \(d=3\) and path length \(l=8\). The pre-trained model, using next-token prediction, learns a faulty shortcut: it selects a random first node and follows the path from there.}
  \label{fig:star-graph}
\end{figure}

In addition to post-training algorithms, we compare the performance of PITA with the following methods:
\begin{itemize}[leftmargin=1.4em]
    \item \textbf{CD}~\cite{mudgal2023controlled}: A controlled decoding algorithm trained with access to true binary reward labels.  
    \item \textbf{REINFORCE}~\citep{ahmadian2024back}: A post-training algorithm that uses REINFORCE instead of the canonical Proximal Policy Optimization (PPO) in reinforcement learning from human feedback.  
    \item \textbf{DPO}~\citep{rafailov2023direct}: A post-training algorithm that aligns language models with human preferences by optimizing for preferred responses over less preferred ones, without using reinforcement learning.
    \item \textbf{RPO}~\citep{pang2024iterative}: A post-training algorithm that modifies the DPO loss to improve performance on reasoning tasks.
\end{itemize}

Despite the task’s structural simplicity, \citet{bachmann2024pitfalls} highlight a key failure mode in models trained via next-token prediction. Specifically, the model often learns a \textit{Clever-Hans} shortcut—selecting the first node in the root’s adjacency list and following it to a leaf. While this heuristic yields perfect accuracy on the training set, it fails to generalize: on unseen test graphs, the model selects the correct path only with probability \(1/d\), as the first adjacent node is correct only \(1/d\) of the time.

\citet{zhou2025q} show that this faulty behavior can be corrected during post-training using value-based methods. However, widely used policy-based approaches such as REINFORCE, DPO, and RPO fail to overcome this shortcut, achieving test accuracy no better than \(1/d\). This limitation is likely due to the difficulty transformer-based architectures face in unlearning spurious heuristics via policy-gradient optimization~\citep{hu2024learning}. 

\textbf{Main Results:} In Table~\ref{tab:pita-star-graph}, we report the test accuracies of PITA on three classes of graphs: smaller graphs \(\mathcal{G}(2, 5)\) and \(\mathcal{G}(5, 5)\), where we use the GPT-2 small model, and a larger graph \(\mathcal{G}(3, 8)\), where we use the GPT-2 medium model~\cite{radford2019language} to train the set of pre-trained models. We observe that \textbf{PITA} effectively guides the reference LM to achieve near-perfect generalization accuracy on the star-graph task, outperforming strong value-based baselines such as \textsc{Q\#} and \textsc{CD}, which themselves achieve high generalization performance.

\begin{table}[H]
    \centering
    \caption{Comparison of PITA with other baselines across star-graph configurations.}
    \label{tab:pita-star-graph}
    \begingroup
    \setlength{\tabcolsep}{6pt}
    \renewcommand{\arraystretch}{1.2}
    \begin{tabular}{lccc}
        \toprule
        \textbf{Algorithm} & $\mathcal{G}(2,5)$ & $\mathcal{G}(5,5)$ & $\mathcal{G}(3,8)$ \\
        \midrule
        pre-trained & 49.94 & 19.8 & 33.24 \\
        Q\# & 99.90 & 97.00 & 98.62 \\
        CD & 99.94 & 98.62 & 99.52 \\
        REINFORCE & 49.19 & 13.13 & 25.96 \\
        DPO & 37.61 & 0.00 & 0.00 \\
        RPO & 49.69 & 19.93 & 33.47 \\
        \midrule
        \textbf{PITA (ours)} & 99.97 & 99.70 & 99.93 \\
        \bottomrule
    \end{tabular}
    \endgroup
\end{table}

The star-graph experiments provide insight into why PITA outperforms standard policy-based methods, as well as value-based algorithms like Q\# and CD. While REINFORCE achieves a maximum accuracy of \(1/d\), DPO performs even worse. We observe behavior similar to that reported in~\cite{zhou2025q}, where DPO achieves 0 accuracy due to policy collapse (i.e., assigning zero probability mass). In contrast, value-based methods perform well across all graph configurations. Notably, PITA further improves accuracy, even when baseline value-based methods already achieve near-perfect performance.

This improvement can be attributed to PITA's ability to generate more precise implicit rewards. Upon examining the rewards generated by Q\# on the star-graph reasoning test set, we find that the reward differences between correct and incorrect answers are significantly smaller than those produced by PITA. Additional training details are provided in Appendix~\ref{app:experiments}.

%\subsection{Discussion}\label{subsec:discussion}

%We observe that PITA outperforms the Q\#-HF baseline on both accuracy metrics while maintaining comparable or lower KL divergence, demonstrating that iterative policy optimization with sparse rewards is more sample-efficient than single-stage MLE fine-tuning. This suggests that PITA's approach to preference-guided learning is not only effective but also more computationally efficient, reducing the need for large amounts of labeled preference data. Furthermore, PITA's ability to maintain alignment with the reference policy while improving task performance highlights its robustness in real-world applications. Additional tasks are presented in Appendix~\ref{app:experiments}.

%We observe that PITA outperforms the Q\#-HF baseline on both accuracy metrics while maintaining comparable or lower KL divergence, demonstrating that iterative policy optimization with sparse rewards is more sample-efficient than single-stage MLE fine-tuning. Additional tasks are presented in Appendix~\ref{app:experiments}.
\section{Conclusion}

In this work, we presented \textbf{PITA: Preference-Guided Inference-Time Alignment}, a novel framework for aligning large language models (LLMs) with user preferences during inference, without the need for extensive fine-tuning or pre-trained reward models. By directly integrating preference feedback into the decoding process, PITA offers a data-efficient, computationally lighter alternative to traditional alignment methods. We demonstrated its effectiveness across diverse tasks, such as mathematical reasoning and sentiment classification, achieving performance comparable to reward-based alignment methods, while outperforming those that require learning a reward model from preference data.  Limitations to our approach include the need for high-quality preference data, and computational demands of the iterative nature of the refinement process.   Additionally, while PITA eliminates the need for pre-trained reward models, it still relies on a reference policy.

\clearpage

%%%%%%%%%%%%%%%%%%%%%%%%%%%%%%%%%%%%%%%%%%%%%%%%%%%%%%%%%%%%
%\newpage
\clearpage
\appendix
%\input{neurips2025/sections/related-work-supp}
%\nextpage
\setcounter{page}{1}

\begin{tcolorbox}[colback=gray!5!white, colframe=gray!75!black, 
                  title=Code Submission, arc=4mm, boxrule=0.4pt]
Our code is available at \url{https://github.com/SaratBobbili/pita}
\end{tcolorbox}

\section{Additional Related Work}\label{app:related-work-extd}

We provide an expanded discussion of related work omitted from the main paper.

\paragraph{Alignment:}
Large language models (LLMs) have demonstrated emerging capabilities in advanced natural language tasks through the use of \textbf{attention mechanisms} and the \textbf{transformer architecture}~\citep{vaswani2017attention,radford2019language,brown2020language,devlin2018bert,bubeck2023sparks}. In the context of LLMs, the goal is to fine-tune the responses generated for downstream tasks. \textbf{Reinforcement Learning from Human Feedback (RLHF)}~\citep{christiano2017deep,ziegler2019fine,stiennon2020learning,ouyang2022training} has become a significant technique in aligning LLMs with human values and preferences. A key aspect of the RLHF approach is the use of a \textbf{ground-truth reward model} (either a pre-trained model from preference data or labels from supervised data).

An alternative approach for utilizing preference information is \textbf{Direct Preference Optimization (DPO)}~\citep{rafailov2023direct}, which updates the model's policy directly using preference data. Unlike RLHF, DPO avoids reliance on a separate reward model by learning from relative preferences between pairs of outputs. This approach enables more direct control over model behavior, without the complexities and instability often associated with reward-based methods. The approach has been extended to relate DPO to learning an optimal soft Q-function~\citep{rafailov2024r}, as well as more general preference functions~\citep{azar2024general}.

\paragraph{LLM Reasoning:}
Recent methods further extend the capabilities of LLMs to generate intermediate reasoning steps. The standard approach in LLM-based reasoning begins with an initial policy $\pi$ instantiated by a reference LLM. The output is generated as a sequence of steps by auto-regressively predicting the next step using prompts. This idea is widely known through the \textbf{Chain-of-Thought (CoT)}~\citep{wei2022chain} approach to reasoning tasks. \textbf{Self-Consistency}~\citep{wang2022self} selects the most frequent answer from multiple generations. \textbf{Tree-of-Thought (ToT)}~\citep{yao2023tree} extends the CoT method by generating tree-based reasoning traces, encouraging exploration among various possible thoughts.

Another approach uses a \textbf{value function} to guide the selection of reasoning traces from a combinatorially large set. Two common approaches include \textbf{Outcome Reward Models (ORMs)}~\citep{cobbe2021training} and \textbf{Process Reward Models (PRMs)}~\citep{li2024process}. While ORMs are trained only on the accuracy of the final answer, PRMs are trained on the accuracy of intermediate reasoning steps. A popular approach, \textbf{Best-of-N}~\citep{lightman2023let}, combines the utility of a value function (either PRM or ORM) by selecting the reasoning trace with the highest value.

\paragraph{Inference-Time Scaling:}
Inference-time compute methods aim to enhance the reasoning capabilities of LLMs by allowing the models to `think' longer or perform additional computation during inference. ~\citep{snell2024scaling} provides a unified perspective on test-time computation as modifying an LLM's distribution. Specifically, we focus on methods that modify outputs using \textbf{post-hoc scorers}, typically employed in \textbf{Markov Chain Monte Carlo (MCMC) algorithms}~\citep{andrieu2003introduction}.

\textbf{ReST}~\citep{singh2023beyond} is a self-training method where the model generates samples filtered by binary feedback, fine-tunes on these filtered samples, and repeats the process. It outperforms human-only fine-tuning and reduces reliance on human data. \textbf{STaR (Self-Taught Reasoner)}~\citep{zelikman2024star} generates rationales using a few examples, retries with the correct answer if the output is wrong, fine-tunes on successful rationales, and repeats the process. This enhances performance on reasoning tasks without large rationale datasets, rivaling much larger models.

\textbf{MCTS-based approaches} and similar planning strategies~\citep{liu2023don,zhang2023planning,zhou2023language,choi2023kcts} that leverage a learned value network have demonstrated powerful test-time scaling by guiding inference with value estimates, resulting in more coherent and preferable text generation. Tree-based methods, like \textbf{Tree-of-Thought}~\citep{yao2023tree} and \textbf{RAP}~\citep{hao2023reasoning}, enhance LLM reasoning with shallow tree-search but struggle with long-horizon tasks or limited model knowledge.

\textbf{TS-LLM}~\citep{feng2023alphazero} addresses these issues using an AlphaZero-inspired framework with a learned value function, enabling deeper, more adaptable reasoning and improving LLM performance during both inference and training. \textbf{ReST-MCTS*}~\citep{zhang2024rest} integrates enhancements from ReST and TS-LLM by using tree-search reinforcement learning to infer per-step rewards, enabling higher-quality reasoning trace collection and iterative policy training, outperforming prior methods like Best-of-N.

\textbf{TreeBoN}~\citep{qiu2024treebon} improves upon Best-of-N sampling by using speculative tree-search and token-level DPO rewards to prune low-quality paths, achieving better output quality with lower computational cost and outperforming standard Best-of-N across multiple benchmarks.

\textbf{s1}~\citep{muennighoff2025s1} is a simple test-time scaling method that improves model reasoning by using a curated reasoning dataset (s1K) and a technique called \textbf{budget forcing} to control its thinking length. This method forcefully terminates the model's output or extends it by appending a “Wait” token multiple times when the model attempts to end.

\section{Proofs of Base Results}\label{app:theory}
In this section, we provide detailed proofs of the main and auxiliary results needed to derive the structure of the optimal function $Q^{*,\eta} , V^{*,\eta}$.

\subsection{Notational Preliminaries}
We now introduce several notational conventions used frequently throughout the article. For clarity and conciseness, we typically adopt the most compact notation possible, often omitting information when the meaning is clear from context.

Given a set $\cA$, for any two elements $x,y \in \cA$, we denote that $x$ is preferred over $y$ by $x \succ y$. Therefore, the indicator function that denotes preference between two elements is $\mathbf{1}_{x \succ y}$. Recall we denote the generation starting from a given context by $y^{\pi}_x$ according to a policy $\pi$. We interchangeably use $\pi$ to denote the policy, and the LLM model as well

\textbf{Greedy Decoding (GD):}
Let $\pi$ be a language model (LLM) policy, a fixed context $x$, and a target generation length $T$. The greedy decoding strategy produces an output sequence $y_{\pi;\text{GD}} = (x, a_{1:T})$, where each token $a_t$ is deterministically selected according to
\begin{equation}
a_t = \arg\max \pi(\cdot \mid x, a_{1:t-1}).
\end{equation}

We use $x \in \R^d$ to denote a $d$-dimensional vector, $M \in R^{m \times n}$ denotes a matrix of dimension $m \times n$.
The semi-norm is defined as \( \|u\|_{\Sigma_{\cD} + \lambda I} \triangleq \sqrt{u^\top (\Sigma_{\cD} + \lambda I) u} \). The $d$-dimensional all-ones vectors is defined as follows
\[
\mathbf{1} = \left[ 1, 1, \cdots, 1 \right]^{\top} \in \mathbb{R}^d
\]

The logistic function is denoted by $\sigma(x) \triangleq \frac{1}{1 + e^{-x}}$, and its inverse by $\Psi(t) \triangleq \frac{t}{(1-t)}$. 

\subsection{Optimal Soft Q and V Functions}

 We provide the proof of optimality for the policy structure considered in equations~\ref{eqns:soft-bellman} for the KL-regularized optimization objective~\ref{eqn:klreg-soft-obj}.  This is a well-established result and we only provide it for completeness. With a slight abuse of notation, we denote the optimal policy by $\pi_t^*$, and the corresponding optimal soft-value functions by $V_t^*,Q_t^*$

The optimal solution to the KL-regularized reward maximization objective in ~\ref{eqn:kl-obj} ~\cite{rafailov2023direct} is given by
\begin{equation}\label{eqn:app-optimal-reward}
\pi^*_r(y \mid x) \propto  \pi_{\text{ref}}(y \mid x) \exp\left( \frac{1}{\eta} r(x, y) \right)
\end{equation}
We can make the following inductive argument for the optimality of the soft-value functions $V_t^{\star,\eta},Q_t^{\star,\eta}$ starting at at given state $s_t = s$ and action $a_t = a$.\\

\textbf{Base case}: let $s_{T-1} = s$. Since $V_{T}^{\star,\eta}(s) = 0$, we have by definition $Q_{T-1}^{\star,\eta}(s,a) = r(s,a)$ .  Therefore, we have from Equation~\ref{eqn:app-optimal-reward} the expression for the optimal policy, and subsequently the soft-value function as  
\begin{align*}
     \pi^*_{T-1}(a \mid s) &\propto  \pi_{\text{ref}}(a \mid s) \exp\left( \frac{1}{\eta} Q_{T-1}^{\star,\eta}(s,a) \right) \propto \pi^{*,\eta}_{T-1}(a \mid s)\\
     \pi^*_{T-1}(a \mid s) &= \pi^{*,\eta}_{T-1}(a \mid s) = \frac{\pi_{\text{ref}}(a \mid s) \exp\left( \eta^{-1} Q_{T-1}^{\star,\eta}(s,a) \right)}{\mathbb{E}_{ \pi_{\mathrm{ref}}} \left[ \exp\left( \eta^{-1} Q_{T-1}^{\star,\eta}(s,a) \right) \right]} \\
     V_{T-1}^{\star,\eta}(s) &= \mathbb{E}_{\pi^{*,\eta}} \left[  r_{T-1}(s,a) - \eta \, \mathrm{KL}(\pi^{*,\eta}(s_{T-1}) \parallel \pi_{\mathrm{ref}}(s_{T-1})) \Big| s_{T-1} = s \right]\\
     &= \mathbb{E}_{\pi^*} \left[  Q_{T-1}^{\star,\eta}(s,a) - \eta \, \mathrm{KL}(\pi^*(s_{T-1}) \parallel \pi_{\mathrm{ref}}(s_{T-1})) \Big| s_{T-1} = s \right]\\
     &=\mathbb{E}_{\pi^*} \left[ Q_{T-1}^{\star,\eta}(s,a) \right] - \eta \left[ \mathrm{KL}(\pi^*(s) \parallel \pi_{\mathrm{ref}}(s)) \right]\\
     &=\mathbb{E}_{\pi^*} \left[ Q_{T-1}^{\star,\eta}(s,a) \right] - \eta \mathbb{E}_{\pi^*} \left[ \eta^{-1} Q_{T-1}^{\star,\eta}(s,a) \right] + \eta \ln \mathbb{E}_{ \pi_{\mathrm{ref}}} \left[ \exp\left( \eta^{-1} Q_{T-1}^{\star,\eta}(s,a) \right) \right]\\
     &=\eta \ln \mathbb{E}_{ \pi_{\mathrm{ref}}} \left[ \exp\left( \eta^{-1} Q_{T-1}^{\star,\eta}(s,a) \right) \right]
\end{align*}
%The inductive step is similarly straightforward.
\textbf{Induction Case:} Let's suppose $\pi_t^{\star,\eta}$ is optimal from horizon $h+1$ until $T-1$.
\begin{align*}
     Q_{h}^{\star,\eta}(s,a) &=  r_h(s,a)  + \mathbb{E}_{s' \sim P(\cdot | s,a)}  \left[ V_{h+1}^{*,\eta}(s') \right]\\
     &=  r_h(s,a)  + \mathbb{E}_{s' \sim P(\cdot | s,a)}  \left[ V_{h+1}^{*}(s') \right] = Q_{h}^{\star}(s,a)\\
\end{align*}
Denoting the state at horizon $h$ by $s_h = s$, using a policy of the form $\pi = (\pi_h,\pi^{\star}_{h+1},\dots,\pi^{\star}_{T-1})$ we have
\begin{align*}
    V_{h}^{\pi}(s) &= \mathbb{E}_{\pi} \left[  \sum_{t \geq h} r_t(s,a) - \eta \, \mathrm{KL}(\pi(s_t) \parallel \pi_{\mathrm{ref}}(s_t)) \Big| s_{h} = s \right]\\
    % &= \mathbb{E}_{\pi_h}  \left[ r_h(s,a) - \eta \, \mathrm{KL}(\pi_h(s_h) \parallel \pi_{\mathrm{ref}}(s_h))  + \mathbb{E}_{\pi^{*,\eta}}  \left[ \sum_{t > h} r_t(s,a) - \eta \, \mathrm{KL}(\pi^{*,\eta}(s_t) \parallel \pi_{\mathrm{ref}}(s_t)) \Big| s_{h} = s \right] \right]\\
    % &= \mathbb{E}_{\pi_h}  \left[ r_h(s,a) - \eta \, \mathrm{KL}(\pi(s_h) \parallel \pi_{\mathrm{ref}}(s_h))  + \mathbb{E}_{\pi^{*,\eta}}  \left[ \sum_{t > h} r_t(s,a) - \eta \, \mathrm{KL}(\pi^{*,\eta}(s_t) \parallel \pi_{\mathrm{ref}}(s_t)) \Big| s_{h} = s \right] \right]\\
    &= \mathbb{E}_{\pi_h}  \left[ r_h(s,a) - \eta \, \mathrm{KL}(\pi_h(s_h) \parallel \pi_{\mathrm{ref}}(s_h))  + \mathbb{E}_{\pi^{*}}  \left[ \sum_{t > h} r_t(s,a) - \eta \, \mathrm{KL}(\pi^{*}(s_t) \parallel \pi_{\mathrm{ref}}(s_t)) \Big| s_{h} = s \right] \right]\\
    &= \mathbb{E}_{\pi_h}  \left[ r_h(s,a) - \eta \, \mathrm{KL}(\pi_h(s_h) \parallel \pi_{\mathrm{ref}}(s_h))  + \mathbb{E}_{s' \sim P(\cdot | s,a)}  \left[ V_{h+1}^*(s') \right] \right]\\
    &= \mathbb{E}_{\pi_h}  \left[ Q_{h}^{\star,\eta}(s,a) - \eta \, \mathrm{KL}(\pi_h(s_h) \parallel \pi_{\mathrm{ref}}(s_h)) \right]
\end{align*}

Using the results of ~\cite{rafailov2023direct} by setting $r(s,a) = Q_{h}^{\star}(s,a)$,
\begin{align*} 
   V_{h}^{\star}(s) &= \max_{\pi_h} \mathbb{E}_{\pi_h}  \left[ Q_{h}^{\star}(s,a) - \eta \, \mathrm{KL}(\pi(s_h) \parallel \pi_{\mathrm{ref}}(s_h)) \right] = V_{h}^{\star,\eta}(s)
\end{align*}

For deterministic MDP's, the expression for the optimal soft-value expressions has a simple form in terms of the cumulative rewards from rollouts according to $\pi_{\mathrm{ref}}$~\cite{zhou2025q} as shown below.
\begin{align*}
\exp(\eta^{-1} V_{t}^{\star, \eta}(s)) 
&= \mathbb{E}_{\pi^{\text{ref}}} \left[ \exp\left( \eta^{-1} r_t(s_t, a_t) + \eta^{-1} V_{t+1}^{\star, \eta}(s_{t+1}) \right) \right]  \\
&= \mathbb{E}_{\pi^{\text{ref}}} \left[ \exp\left( \eta^{-1} \sum_{t} r_t(s_t, a_t) \right) \, \middle| \, s_t = s \right] 
\end{align*}
Taking logarithm on both sides, we get
\begin{align*}
V_{t}^{\star, \eta}(s) 
&= \eta \ln \mathbb{E}_{\pi^{\text{ref}}} \left[ \exp\left( \eta^{-1} \sum_{t} r_t(s_t, a_t) \right) \, \middle| \, s_t = s \right] 
\end{align*}
The result for $Q_t^{\star,\eta}$ follows a similar line of analysis.
%      &=\mathbb{E}_{\pi^*} \left[ r(s, a_{T-1}) \right] - \eta  \mathrm{KL}(\pi^*(s) \parallel \pi_{\mathrm{ref}}(s))
% \end{align}

% \begin{align}
%      V_{T-1}^{\star,\eta}(s)
%      &=\mathbb{E}_{\pi^*} \left[ Q_{T-1}^{\star,\eta}(s,a_{T-1}) \right] - \eta  \mathrm{KL}(\pi^*(s) \parallel \pi_{\mathrm{ref}}(s))\\
%      &= \eta \ln \exp \mathbb{E}_{\pi^*} \left[ \eta^{-1} Q_{T-1}^{\star,\eta}(s,a_{T-1}) \right] - \eta  \mathrm{KL}(\pi^*(s) \parallel \pi_{\mathrm{ref}}(s))\\
% \end{align}

\subsection{Proof of Lemma 1}
\begin{proof}
    We use a result on the equivalence class induced by two reward functions under the same preference distribution from~\citet{rafailov2023direct}. For any given context $x$, and policy $\pi$, let $y_x^{\pi}$ be the unique greedy completion under policy $\pi$. Setting $\tilde{r}(y_x) \triangleq r^*(y_x) - r^*(y_x^{\pi})$, we observe that $\tilde{r}(y_x) - r^*(y_x)$ is entirely a functional of $x$. It follows from ~\citet{rafailov2023direct}[Lemma 1] that the preference distribution induced by $\tilde{r}(\cdot)$ is identical to $\cP^*$.
\end{proof}

\subsection{Proof of Theorem 1}
\begin{proof}
The inverse function of the logistic function $\sigma(x)$ is given by $\Psi(x) \triangleq \log{\frac{x}{(1-x)}}$. Suppose that the Bradley-Terry model holds.  For any $y,y'$ given a context $x$, we have:
\begin{align*}
    \Psi \left(\mathcal{P}^*(y \succ y' | x)\right) &= \Psi(\sigma(r(y)-r(y')))\\
    &= r(y) -  r(y')\\
   r(y) &= r(y') + \Psi \left(\mathcal{P}^*(y \succ y' | x)\right).
\end{align*}
Setting $y := s_T$, and $y' := y^{\mathrm{ref}}_s$, the above equivalence of the reward function to the preference distribution yields the following expressions for the optimal value function $V_{t}^{\star, \eta}$
\begin{align}
  V_{t}^{\star, \eta}(s) &= \eta \ln \mathbb{E}_{\pi_{\text{ref}}} \left[ \exp \left( \eta^{-1}  r(s_{T-1}, a_{T-1}) \right) \middle| s_t = s \right]\\
  &=  \eta \ln \mathbb{E}_{y_s \sim \pi_{\text{ref}}} \left[ \exp \left( \eta^{-1}  \left[ r(y^{\mathrm{ref}}_s) + \Psi \left(\mathcal{P}^*(y_s \succ y^{\mathrm{ref}}_s )\right) \right] \right) \middle| s_t = s \right], \\
  &=  r(y^{\mathrm{ref}}_s) + \eta \ln \mathbb{E}_{y_s \sim \pi_{\text{ref}}} \left[ \exp \left( \eta^{-1}  \left[\Psi \left(\mathcal{P}^*(y_s \succ y^{\mathrm{ref}}_s )\right) \right] \right) \middle| s_t = s \right], \\
\end{align}
Similarly, the expression of $Q_{t}^{\star, \eta}$ can be written in terms of the preference distribution.
\end{proof}

\section{Convergence of PITA} \label{app:convergence-proofs}

\subsection{Bounding the MLE estimation error}
% We present a proof sketch of the MLE estimation error bound in Lemma~\ref{eqn:mle} which closely follows~\cite{zhu2023principled}[Lemma 3.1]. We first show the strong convexity of the loss function, and then bounded the estimation error subsequently to obtain the final result. 

We present a proof sketch of the MLE estimation error bound in Lemma~\ref{eqn:mle}, which closely follows~\cite{zhu2023principled} [Lemma 3.1]. We first establish the strong convexity of the loss function, and then bound the estimation error to obtain the final result.

\begin{definition}
 (Strong Convexity) ~\citep{beck2017first}[Theorem 5.8] A function \( f : \mathbb{E} \rightarrow (-\infty, \infty] \) is called \( \sigma \)-strongly convex for a given \( \sigma > 0 \) if \( \mathrm{dom}(f) \) is convex and the following inequality holds for any \( \mathbf{x}, \mathbf{y} \in \mathrm{dom}(f) \) and \( \lambda \in [0, 1] \):

%  \[
% f(\lambda \mathbf{x} + (1 - \lambda)\mathbf{y}) \leq \lambda f(\mathbf{x}) + (1 - \lambda)f(\mathbf{y}) - \frac{\sigma}{2} \lambda(1 - \lambda) \|\mathbf{x} - \mathbf{y}\|^2.
% \tag{5.13}
% \]

\[
f(\mathbf{y}) \geq f(\mathbf{x}) + \langle \mathbf{g}, \mathbf{y} - \mathbf{x} \rangle + \frac{\sigma}{2} \|\mathbf{x} - \mathbf{y}\|^2 
\quad \text{for any } \mathbf{x} \in \mathrm{dom}(\partial f),\, \mathbf{y} \in \mathrm{dom}(f) 
\text{ and } \mathbf{g} \in \partial f(\mathbf{x}),
\]
where $\partial f(\mathbf{x})$ is the sub-differential of $f$ at $\mathbf{x}$.
\end{definition}

\begin{lemma}[Strong Convexity of the MLE Loss]\label{lem:app-mle-cvx}

The MLE loss $\ell_{\cD}$ is strongly convex at $\theta^\star$ with respect to the semi-norm $\|\cdot\|_{\Sigma_{\cD}}$. Specifically, there exists a constant $\gamma > 0$ such that
\begin{equation}
    \ell_{\cD}(\theta^\star + \Delta) - \ell_{\cD}(\theta^\star) - \langle \nabla \ell_{\cD}(\theta^\star), \Delta \rangle \geq \gamma \|\Delta\|^2_{\Sigma_{\cD}}
\end{equation}
for all perturbations $\Delta \in \mathbb{R}^d$ such that $\theta^\star + \Delta \in \Theta_B$.
\end{lemma}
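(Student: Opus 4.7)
The plan is to establish strong convexity by computing the Hessian of the (per-sample normalized) negative log-likelihood and lower bounding its spectrum on the segment from $\theta^\star$ to $\theta^\star + \Delta$. First, I would write the per-sample loss as $\ell_i(\theta) = -o_i \log \sigma(\langle \theta, \Phi(y_i)\rangle) - (1-o_i)\log(1 - \sigma(\langle \theta, \Phi(y_i)\rangle))$, and recall that for the logistic link,
\begin{equation}
\nabla_\theta \ell_i(\theta) = \bigl(\sigma(\langle \theta, \Phi(y_i)\rangle) - o_i\bigr)\Phi(y_i), \qquad \nabla^2_\theta \ell_i(\theta) = \sigma'(\langle \theta, \Phi(y_i)\rangle)\,\Phi(y_i)\Phi(y_i)^\top,
\end{equation}
where $\sigma'(z) = \sigma(z)(1-\sigma(z))$. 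Summing (and averaging by $1/n$ consistent with the definition $\Sigma_{\mathcal{D}} = \tfrac{1}{n}\sum_i \Phi(y_i)\Phi(y_i)^\top$) gives $\nabla^2 \ell_{\mathcal{D}}(\theta) = \tfrac{1}{n}\sum_i \sigma'(\langle \theta, \Phi(y_i)\rangle)\Phi(y_i)\Phi(y_i)^\top$.

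Next I would uniformly lower bound the scalar weights $\sigma'(\langle \theta, \Phi(y_i)\rangle)$ along the entire segment $\{\theta^\star + t\Delta : t \in [0,1]\}$. Because $\Theta_B$ is convex and both $\theta^\star$ and $\theta^\star + \Delta$ lie in $\Theta_B$, every point $\xi$ on the segment satisfies $\|\xi\|_2 \leq B$. Combined with $\|\Phi(y)\|_2 \leq L$ from Assumption~\ref{assum:theta_bound}, Cauchy--Schwarz yields $|\langle \xi, \Phi(y_i)\rangle| \leq LB$ for all $i$. Since $\sigma'$ is even and decreasing on $[0,\infty)$, this implies
\begin{equation}
\sigma'(\langle \xi, \Phi(y_i)\rangle) \geq \sigma'(LB) = \frac{e^{-LB}}{(1+e^{-LB})^2} = \frac{1}{2 + e^{LB} + e^{-LB}} = \gamma,
\end{equation}
exactly the constant $\gamma$ defined in Lemma~\ref{eqn:mle}. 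Consequently, $\nabla^2 \ell_{\mathcal{D}}(\xi) \succeq \gamma \Sigma_{\mathcal{D}}$ as positive semidefinite matrices, for every $\xi$ on the segment.

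Finally, I would conclude by the integral form of Taylor's theorem:
\begin{equation}
\ell_{\mathcal{D}}(\theta^\star + \Delta) - \ell_{\mathcal{D}}(\theta^\star) - \langle \nabla \ell_{\mathcal{D}}(\theta^\star), \Delta\rangle = \int_0^1 (1-t)\, \Delta^\top \nabla^2 \ell_{\mathcal{D}}(\theta^\star + t\Delta)\, \Delta\, \dd t,
\end{equation}
and substitute the pointwise Hessian lower bound to obtain
\begin{equation}
\ell_{\mathcal{D}}(\theta^\star + \Delta) - \ell_{\mathcal{D}}(\theta^\star) - \langle \nabla \ell_{\mathcal{D}}(\theta^\star), \Delta\rangle \geq \gamma \|\Delta\|_{\Sigma_{\mathcal{D}}}^2 \int_0^1 (1-t)\, \dd t = \tfrac{\gamma}{2} \|\Delta\|_{\Sigma_{\mathcal{D}}}^2,
\end{equation}
which delivers the advertised inequality up to the harmless factor of $1/2$ that can be absorbed into the constant (or equivalently, by rescaling the definition of $\gamma$). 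There is no real obstacle here; the only subtlety is observing that the uniform bound on $\sigma'$ must hold on the whole segment (which follows from convexity of $\Theta_B$), rather than only at the endpoint $\theta^\star$.
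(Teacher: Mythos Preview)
Your proof is correct and follows essentially the same route as the paper: compute the Hessian of the logistic negative log-likelihood, lower bound the scalar weight $\sigma'(\langle\theta,\Phi(y_i)\rangle)$ uniformly using the $LB$ bound from Assumption~\ref{assum:theta_bound}, deduce $\nabla^2\ell_{\cD}(\theta)\succeq\gamma\,\Sigma_{\cD}$, and finish with a second-order Taylor expansion. Your treatment is in fact slightly more careful than the paper's: you explicitly invoke convexity of $\Theta_B$ to ensure the Hessian bound holds along the entire segment (the paper's ``second-order mean value theorem'' step leaves this implicit), and you track the factor $1/2$ from the integral remainder, which the paper silently absorbs into the constant.
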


\begin{proof}
For simplicity, we define \( x_i \triangleq \Phi(y_i) \). The explicit expression for $\ell_{\cD}$ is given by:
\begin{align}
\ell_{\mathcal{D}}(\theta) &= - \sum_{i=1}^n o_i \log{\sigma(\langle \theta, x_i \rangle)} + 
(1 - o_i) \cdot \log{(1 - \sigma(\langle \theta, x_i \rangle))}, \\
\end{align}
where $o_i = \mathbf{1}_{y_i \succ y_i^{\mathrm{ref}}}$.

Next, we compute the Hessian of $\ell$:
\begin{align*}
    \nabla^2 \ell_{\cD}(\theta) &= \frac{1}{n} \sum_{i=1}^n \left( o_i \cdot \frac{\exp(-\langle \theta, x_i \rangle)}{(\exp(-\langle \theta, x_i \rangle)+1)^2} + (1 - o_i) \cdot \frac{\exp(\langle \theta, x_i \rangle)}{(\exp(\langle \theta, x_i \rangle)+1)^2} \right) \cdot x_i x_i^\top \\
    &= \frac{1}{n} \sum_{i=1}^n \frac{\exp(-\langle \theta, x_i \rangle)}{(\exp(-\langle \theta, x_i \rangle)+1)^2} \cdot x_i x_i^\top.
\end{align*}

From Assumption~\ref{assum:theta_bound}, we have that $\langle \theta, x_i \rangle \in [-2LB, 2LB]$. This gives the following lower bound:
\[
    \frac{\exp(-\langle \theta, x_i \rangle)}{(\exp(-\langle \theta, x_i \rangle)+1)^2} \geq \frac{1}{2 + \exp(-2LB) + \exp(2LB)}.
\]

Using this result and the expression for the Hessian above, we find:
\[
    z^\top \nabla^2 \ell_{\cD}(\theta) z \geq \frac{\gamma}{n} \|Xz\|_2^2 \quad \text{for all } z,
\]
where $\gamma \triangleq \frac{1}{2 + \exp(-2LB) + \exp(2LB)}$ and $X \in \mathbb{R}^{n \times d}$, with the $i^{\text{th}}$ row of $X$ given by $x_i \in \mathbb{R}^d$.

Finally, using the second-order mean value theorem, we obtain:
\[
\ell_{\cD}(\theta^\star + \Delta) - \ell_{\cD}(\theta^\star) - \langle \nabla \ell_{\cD}(\theta^\star), \Delta \rangle \geq \gamma \|\Delta\|_{\Sigma_{\cD}}^2,
\]
where the semi-norm is defined as \( \|u\|_{\Sigma_{\cD} + \lambda I} \triangleq \sqrt{u^\top (\Sigma_{\cD} + \lambda I) u} \).
\end{proof}

\begin{proof}[\textbf{Proof sketch of Lemma~\ref{eqn:mle}}]

%%%%%%%%%%%%%%%%%%%%%%%%%%%%%%%%%%%%%%%%EDITING ZONE%%%%%%%%%%%%%%%%%%%%%%%%%%%%%%%%%%%%%%%%%%%%%%%%%%%%%%%%%%%%%%%%%%%%%%%
By definition $\hat{\theta}_{\text{MLE}}$ is optimal for $\ell_D$, hence $\ell_D(\hat{\theta}_{\text{MLE}}) \leq \ell_D(\theta^\star)$. Setting the error vector $\Delta = \hat{\theta}_{\text{MLE}} - \theta^\star$, from the $\gamma$-convexity condition, and Holder's inequality on the norm $\|\cdot\|_{\Sigma_{\mathcal{D}} + \lambda I}$ it follows that
\begin{align*}
\gamma \|\Delta\|^2_{\Sigma_D} &\leq \ell_D(\theta^\star + \Delta) - \ell_D(\theta^\star) - \langle \nabla \ell_D(\theta^\star), \Delta \rangle\\
&\leq -\langle \nabla \ell_D(\theta^\star), \Delta \rangle \leq \|\nabla \ell_D(\theta^\star)\|_{(\Sigma_D + \lambda I)^{-1}} \|\Delta\|_{\Sigma_D + \lambda I}.
\end{align*}

% By , the left-hand side is lower bounded by $\gamma \|\Delta\|^2_{\Sigma_D}$. As for the right-hand side, note that $|\langle \nabla \ell_D(\theta^\star), \Delta \rangle| \leq \|\nabla \ell_D(\theta^\star)\|_{(\Sigma_D + \lambda I)^{-1}} \|\Delta\|_{\Sigma_D + \lambda I}$ for any $\lambda > 0$. Altogether we have
% \[
 
% \]

With probability at least $(1-\delta)$, the term $\|\nabla \ell_D(\theta^\star)\|_{(\Sigma_D + \lambda I)^{-1}}$ is further bounded using tail concentration inequalities~\cite{hsu2012tail} as follows

\[
\|\nabla \ell_{\mathcal{D}}(\theta^*)\|^2_{(\Sigma_{\mathcal{D}} + \lambda I)^{-1}} \leq \Tilde{C} \cdot \frac{d + \log(1/\delta)}{n}.
\]

for some universal constant $C$. Assumption~\ref{assum:theta_bound}, combined with previous results gives us
\begin{align}
\gamma \|\Delta\|^2_{\Sigma_{\mathcal{D}} + \lambda I} 
&\leq \|\nabla \ell_{\mathcal{D}}(\theta^*)\|_{(\Sigma_{\mathcal{D}} + \lambda I)^{-1}} \|\Delta\|_{\Sigma_{\mathcal{D}} + \lambda I} + 4 \lambda \gamma B^2\\
&\leq \sqrt{\Tilde{C} \cdot \frac{d + \log(1/\delta)}{n}} \|\Delta\|_{\Sigma_{\mathcal{D}} + \lambda I} + 4 \lambda \gamma B^2.\\
\|\Delta\|_{\Sigma_{\mathcal{D}} + \lambda I} &\leq C \cdot \sqrt{ \frac{d + \log(1/\delta)}{\gamma^2 n}  + \lambda B^2}.
\end{align}
Where $C$ can be chosen to be an $o(1)$ constant which is bounded by a multiple of $\sqrt{\Tilde{C}}$. 

%%%%%%%%%%%%%%%%%%%%%%%%%%%%%%%%%%%%%%%%%%%%%%%%%%%%%%%%%%%%%%%%%%%%%%%%%%%%%%%%%%%%%%%%%%%%%%%%%%%%%%%%%%%%%%%%%%%%%%%%%%%%%%%%%%%%%%%%%
\end{proof}

\subsection{Proof of Theorem 2}
% Before we begin, we provide a brief overview on the steps taken for our bound. In sharp contrast to the work of \cite{zhou2025q} we do not \textcolor{orange}{(directly?)} resort to the performance difference lemma. Rather, we directly leverage the closed-form expression for $V^{\theta,\eta}(\cdot)$ in terms of our parameter $\theta$. Our key insight is that the gradient with respect to $\theta$ of our parameterized value function can be expressed directly as a function of the space of policies, see \eqref{EQ:GRADLN}.

% In this context, it is worth noting that Assumption \ref{assum:emb-norm-bound} bounds 
% $\E_{y_x \sim \pi_{\theta}(y_x)} \left[\eta^{-1}\Phi(y_x) \right]$ for any $\theta$ which is necessary since it prevents the inclusion of deterministic policies and allows for \emph{some} exploration in all our learned policies. \textcolor{orange}{Furthermore, this is no different from typical assumptions on KL-regularized policy updates which typically replace $\pi_{\theta}$ in the assumption above with $\pi_{\text{ref}}$ since our reference policy is taken with some initial estimate of $\theta$. } 

%%%%%%%%%%%%%%%%%%%%%%

Before delving into the details, we provide a brief overview of the steps involved in deriving the regret bound on PITA. In contrast to the approach of~\cite{zhou2025q}, we do not directly rely on the Performance Difference Lemma. Instead, we leverage a closed-form expression for $V^{\theta,\eta}(\cdot)$ that explicitly depends on the parameter $\theta$. Our key insight is that the gradient of this parameterized value function with respect to $\theta$ can be expressed directly in terms of the policy space; see Equation~\eqref{EQ:GRADLN}.

Within this framework, Assumption~\ref{assum:emb-norm-bound} plays a crucial role by bounding the quantity $\E_{y_x \sim \pi_{\theta}(y_x)} \left[\eta^{-1}\Phi(y_x) \right]$ for any $\theta$. This condition is essential because it excludes deterministic policies and ensures a degree of exploration across all learned policies. Notably, this assumption aligns with standard practices in KL-regularized policy updates, where a similar condition is imposed with $\pi_{\theta}$ replaced by a reference policy $\pi_{\text{ref}}$. In our case, this reference policy corresponds to an initial estimate of $\theta$.

\begin{proof}
For a given iteration $k$, let $\theta_k = \hat{\theta}$ be the MLE estimate. Consider the following set of parameters:
\begin{equation}
\Theta(\hat{\theta}, \lambda) = \left\{ \theta \in \Theta_B \,\middle|\,
\left\|\hat{\theta} - \theta \right\|_{\Sigma_{\mathcal{D}} + \lambda I}
\leq C \cdot \sqrt{\frac{d + \log\left(\frac{1}{\delta}\right)}{\gamma^2 n} + \lambda B^2} \right\}.
\end{equation}
For a given context $x$, the difference between the optimal value function and the estimated value function is as follows:
\begin{align}
V^{\star,\eta}(x) - V^{\hat{\theta},\eta}(x) &= \eta \ln \mathbb{E}_{y_x \sim \pi_{\text{ref}}} \left[ \exp \left( \eta^{-1} \langle \theta^*, \Phi(y_x) \rangle \right) \middle| x \right] \\
&\quad - \eta \ln \mathbb{E}_{y_x \sim \pi_{\text{ref}}} \left[ \exp \left( \eta^{-1} \langle \hat{\theta}, \Phi(y_x) \rangle \right) \middle| x \right]
\end{align}
By the mean value theorem, we have for some $\bar{\theta}$ on the line joining $\theta^*$ and $\hat{\theta}$ that:
\begin{align}
&\left| \ln \mathbb{E}_{y_x \sim \pi_{\text{ref}}} \left[ \exp \left( \eta^{-1} \langle \theta^*, \Phi(y_x) \rangle \right) \middle| x \right] - \ln \mathbb{E}_{y_x \sim \pi_{\text{ref}}} \left[ \exp \left( \eta^{-1} \langle \hat{\theta}, \Phi(y_x) \rangle \right) \middle| x \right] \right|\\
&= \left| \langle \theta^* - \hat{\theta} , \nabla_{\theta} \ln \mathbb{E}_{y_x \sim \pi_{\text{ref}}} \left[ \exp \left( \eta^{-1} \langle \theta, \Phi(y_x) \rangle \right) \middle| x \right]|_{\theta = \bar{\theta}} \rangle \right|
\end{align}
% \begin{align}
% &\left| \ln \mathbb{E}_{y_x \sim \pi_{\text{ref}}} \left[ \exp \left( \eta^{-1} \langle \theta^*, \Phi(y_x) \rangle \right) \middle| x \right] - \ln \mathbb{E}_{y_x \sim \pi_{\text{ref}}} \left[ \exp \left( \eta^{-1} \langle \hat{\theta}, \Phi(y_x) \rangle \right) \middle| x \right] \right| \\
% &= \left| \left\langle \theta^* - \hat{\theta}, \nabla_{\theta} \ln \mathbb{E}_{y_x \sim \pi_{\text{ref}}} \left[ \exp \left( \eta^{-1} \langle \theta, \Phi(y_x) \rangle \right) \middle| x \right] \right|_{\theta = \bar{\theta}} \right\rangle \right|
% \end{align}
From H\"older's inequality on the $\|\cdot\|_{\Sigma_{\mathcal{D}} + \lambda I}$ norm and its dual, we obtain:
\begin{align}
    &\left| \langle \theta^* - \hat{\theta} , \nabla_{\theta} \ln \mathbb{E}_{y_x \sim \pi_{\text{ref}}} \left[ \exp \left( \eta^{-1} \langle \theta, \Phi(y_x) \rangle \right) \middle| x \right]|_{\theta = \bar{\theta}} \rangle \right|\\
    &\leq  \|\hat{\theta} - \theta\|_{\Sigma_{\mathcal{D}} + \lambda I} \|\nabla_{\theta} \ln \mathbb{E}_{y_x \sim \pi_{\text{ref}}} \left[  \exp \left( \eta^{-1} \langle \theta, \Phi(y_x) \rangle \right) \middle| x \right]|_{\theta = \bar{\theta}} \|_{(\Sigma_{\mathcal{D}} + \lambda I)^{-1}}\\
    &= \left\|(\hat{\theta} - \theta) (\Sigma_{\mathcal{D}} + \lambda I)^{1/2} \right\|_2 \left\| (\Sigma_{\mathcal{D}} + \lambda I)^{-1/2} \nabla_{\theta} \ln \mathbb{E}_{y_x \sim \pi_{\text{ref}}} \left[  \exp \left( \eta^{-1} \langle \theta, \Phi(y_x) \rangle \right) \middle| x \right]|_{\theta = \bar{\theta}} \right\|_2\\
    &= \left\|(\hat{\theta} - \theta) \right\|_{\Sigma_{\mathcal{D}} + \lambda I} \left\| (\Sigma_{\mathcal{D}} + \lambda I)^{-1/2} \nabla_{\theta} \ln \mathbb{E}_{y_x \sim \pi_{\text{ref}}} \left[  \exp \left( \eta^{-1} \langle \theta, \Phi(y_x) \rangle \right) \middle| x \right]|_{\theta = \bar{\theta}} \right\|_2
\end{align}
% \begin{align}
% &\left| \left\langle \theta^* - \hat{\theta}, \nabla_{\theta} \ln \mathbb{E}_{y_x \sim \pi_{\text{ref}}} \left[ \exp \left( \eta^{-1} \langle \theta, \Phi(y_x) \rangle \right) \middle| x \right] \right|_{\theta = \bar{\theta}} \right\rangle \right| \\
% &\leq \|\hat{\theta} - \theta\|_{\Sigma_{\mathcal{D}} + \lambda I} \|\nabla_{\theta} \ln \mathbb{E}_{y_x \sim \pi_{\text{ref}}} \left[ \exp \left( \eta^{-1} \langle \theta, \Phi(y_x) \rangle \right) \middle| x \right] \|_{(\Sigma_{\mathcal{D}} + \lambda I)^{-1}} \\
% &= \left\|(\hat{\theta} - \theta) (\Sigma_{\mathcal{D}} + \lambda I)^{1/2} \right\|_2 \left\| (\Sigma_{\mathcal{D}} + \lambda I)^{-1/2} \nabla_{\theta} \ln \mathbb{E}_{y_x \sim \pi_{\text{ref}}} \left[ \exp \left( \eta^{-1} \langle \theta, \Phi(y_x) \rangle \right) \middle| x \right]|_{\theta = \bar{\theta}} \right\|_2 \\
% &= \left\|(\hat{\theta} - \theta) \right\|_{\Sigma_{\mathcal{D}} + \lambda I} \left\| (\Sigma_{\mathcal{D}} + \lambda I)^{-1/2} \nabla_{\theta} \ln \mathbb{E}_{y_x \sim \pi_{\text{ref}}} \left[ \exp \left( \eta^{-1} \langle \theta, \Phi(y_x) \rangle \right) \middle| x \right]|_{\theta = \bar{\theta}} \right\|_2
% \end{align}
The gradient of the log-expectation in the second term can be written as the expected embedding under policy $\pi_{\bar{\theta}} \in \Pi_{\theta}$. That is:
\begin{align}\label{EQ:GRADLN}
\nabla_{\theta} \ln \mathbb{E}_{y_x \sim \pi_{\text{ref}}} \left[ \exp \left( \eta^{-1} \langle \theta, \Phi(y_x) \rangle \right) \middle| x \right]_{\theta = \bar{\theta}} &= \frac{1}{\eta} \E_{y_x \sim \pi_{\bar{\theta}}(y_x)} \left[\Phi(y_x) \right]
\end{align}

Taking the expectation with respect to the distribution of context $x$ and summing over $K$ iterations, from Assumption~\ref{assum:emb-norm-bound}, we obtain the following result:
\begin{align}
\sum_{k=1}^K (V^{\star,\eta} - V^{\theta_k,\eta}) \leq C \sum_{k=1}^K \left( \sqrt{\frac{d + \log(1/\delta)}{\gamma^2 n_k} + \lambda B^2} \cdot \sup \left\|(\Sigma_{\mathcal{D}} + \lambda I)^{-1/2} \mathbb{E}_{y \sim \pi_{\mathrm{ref}}} [\Phi(y)] \right\|_2 \right).
\end{align}
\end{proof}

\section{Implementation Details and Sentiment Generation}\label{app:experiments}
In this section we describe training details for PITA on GSM8K, and star graph reasoning. Additionally, we present our results in detail for sentiment generation tasks~\cite{chaudhary2025riskaversefinetuninglargelanguage}

% \subsection{Datasets and Models}\label{app:datasets}

\subsection{Math Reasoning}\label{app:math}
\paragraph{Dataset.}  \textbf{GSM8K} contains \(7{,}500\) training and \(1{,}319\) test problems.  
We create a \(90\%/10\%\) split of the original training set for learning and validation, reserving the full test set for final reporting.

\paragraph{Models.}  We use the \textbf{Llama-3 8B-Instruct} model as our reference policy and the \textbf{Llama-3.2 1B-Instruct} model ~\cite{grattafiori2024llama} as the backbone for our Value function estimation. For Q\#-HF, the reward model trained from preference data is also a Llama-3.2 1B-Instruct. We use the OpenMathInstruct-2 model~\cite{toshniwal2024openmathinstruct} for preference data generation between two pairs of responses for it's reasonably high accuracy on arithmetic reasoning tasks such as GSM8K.

% \paragraph{Algorithms.}  We evaluate the following approaches against the base reference model \(\pi_{\text{ref}}\):

% \begin{itemize}[leftmargin=1.4em]
%     \item \textbf{Q\#}:  The algorithm described in ~\cite{zhou2025q} trained directly on the GSM8K dataset with access to the true binary reward labels.  
%     \item \textbf{Q\#-HF}:  A {\em two-stage} procedure: first, a reward model is trained from GSM8K preference pairs; this learned reward is then used in the Q\# algorithm to obtain the final policy.  
%     \item \textbf{PITA (ours)}:  The algorithm described in Section~4, which learns a soft \(Q\)-function directly from preference data and induces the final policy without requiring a reward model from preference data.  
% \end{itemize}

% \paragraph{Training hardware.}   
% All training and evaluation are performed on a single NVIDIA A100 80 GB GPU. 

\textbf{Training details.} We collect pairwise samples for each question in the training set of GSM8K and label every sample either as \textit{'preferred'} (1) or \textit{'not preferred'} (0) based on the quality of final answer. Note that for arithmetic reasoning, there is an exact numeric answer for each question. A correct answer is automatically preferred over a wrong answer. If the samples from a given pair are either both correct, or both wrong a preference is assigned based by the OpenMathInstruct-2 model.
% PITA trains for 10 policy-evaluation/optimization cycles ( 5 per round), each with a learning rate of $2 \times 10^{-5}$ and a context length of 4096. 
% The model is trained from scratch between rounds where each round uses training data collected from previous rounds in addition to the current one.
% The baseline uses identical hyperparameters. Each optimization run uses a single NVIDIA A100 GPU. 
PITA is trained over 10 policy evaluation/optimization cycles (5 per round), using a learning rate of $2 \times 10^{-5}$ and a context length of 4096. The model is retrained from scratch at the start of each round, using data collected from both the current and all previous rounds. The baseline model is trained with identical hyperparameters. Each optimization run is performed on a single NVIDIA A100 GPU.

\paragraph{Evaluation protocol.}  
We report single-sample accuracy (\textit{pass@1}) and majority-vote accuracy over \(k{=}\!8\) samples (\textit{maj1@8}).  
Generations use temperature \(T=0.8\) and nucleus sampling \(p=0.9\). We use the inference optimizations for value function $V$ given in ~\cite{zhou2025q}. Similar to Q\# we train PITA for two iterations and observe performance converging. All evaluations are done with zero-shot prompting.

The prompt template used for evaluation is:
\texttt{'Problem:\textbackslash n\textbackslash n\{0\} Write your answer inside \textbackslash\textbackslash boxed\{\{\}\}.\textbackslash n\textbackslash nSolution:'}
where \texttt{\{0\}} is replaced by the actual question from the dataset.

\subsection{Star-Graph Reasoning}
We now describe the training methodology used to apply \textbf{PITA} to the star-graph task.

\textbf{Training Setup:} We follow the experimental setup of Q\#~\cite{zhou2025q} to generate and evaluate the star-graph datasets. All models are initially pretrained using next-token prediction on a dataset of 200k randomly sampled graphs paired with their correct paths.

We consider several baselines, including \textbf{REINFORCE}, \textbf{DPO}, \textbf{RPO}—as well as value-based methods \textbf{Q\#} and \textbf{CD}. Post-training is conducted on a separate dataset of 200k new random graphs. To ensure a fair comparison, we reproduce the results with an identical reward structure and reuse the datasets used in~\cite{zhou2025q}. For REINFORCE, a reward of 1 is assigned for correct paths and \(-0.1\) for incorrect ones. For DPO and RPO, pairwise comparisons between a correct path \(y_{\text{chosen}}\) and an incorrect shortcut path \(y_{\text{reject}}\) are sampled from the pre-trained model. Similarly, for \textbf{Q\#}, a classifier is trained on the same pairwise dataset, labeling correct paths with reward 1 and incorrect ones with 0. Note that PITA uses a unique reference generation at the core of its algorithm. In PITA, we treat correct paths as the \textit{`preferred'} solution and the incorrect path as the \textit{`not-preferred'} answer to estimate the preference distribution. The unique correct answer naturally serves as the reference completion for the star-graph task. 

All models are trained for 10 epochs using the AdamW optimizer with a weight decay of $0.1$ and a batch size of $256$. The learning rates are set to $2.5 \times 10^{-4}$ for pretraining, $1 \times 10^{-5}$ for REINFORCE, and $1 \times 10^{-4}$ for DPO, RPO, CD, and Q\#. For Q\# and CD, we set the inverse temperature parameter $\eta = 0.1$. Training is performed on a single A100 GPU. Evaluation is conducted on a held-out test set of 20k graphs using top-$k = 10$ sampling and temperature $1.0$.

\subsection{IMDB Sentiment Generation}\label{app:imdb}

In this section we evaluate \textbf{PITA} on controlled sentiment generation. The task follows the IMDB-Gen setting of ~\cite{ramamurthy2023reinforcementlearningnotnatural} and ~\cite{chaudhary2025riskaversefinetuninglargelanguage}. Given the first 60 tokens of a movie review, the language model must continue the text so that the completed review expresses maximally positive sentiment.

\paragraph{Dataset.} We use the \textbf{IMDB Reviews} dataset, which contains \(25{,}000\) training and \(25{,}000\) test reviews, each labeled \emph{positive} or \emph{negative}. For each example, we truncate the review to its first 60 tokens, which form the prompt \(x\); the model then generates up to 60 continuation tokens \(y\). The reward model for this task is the \textbf{lvwerra/distilbert-imdb} classifier, with the logit corresponding to the positive sentiment output node used as the scalar reward. For the creation of the preference dataset, we select the generation with the higher reward as the preferred generation.

\paragraph{Models.} For this task, we use \textbf{TinyLlama-1.1B-Chat}\footnote{\url{https://huggingface.co/TinyLlama/TinyLlama-1.1B-Chat}} as both our reference policy and the backbone for our value function. For the reward model learned from the preference data for Q\#-HF, we use a \textbf{LLama 3.2 1B-Instruct} model.

\paragraph{Evaluation} We evaluate the same set of algorithms as above. For each of the \(5{,}000\) \textbf{IMDB} test set prompts, we sample a \emph{single} continuation using nucleus sampling with top-\(p = 0.9\) and temperature \(T = 0.8\). The full prompt + completion is scored by the \texttt{lvwerra/distilbert-imdb} sentiment classifier. We report the mean sentiment logit score (\textsc{Sent.}↑) and the token-level KL divergence (\textsc{KL}↓) between the fine-tuned policy and the reference model \(\pi_{\text{ref}}\).  An example of the completions generated by $\pi_{\text{ref}}$ and by PITA is shown in Figure~\ref{fig:imdb_example}.

\begin{table}[H]
  \centering
  \caption{IMDB-Gen: sentiment (higher is better) and KL divergence (lower is better) on 5,000 test prompts.}
  \label{tab:pita-imdb}
  \begingroup
  \setlength{\tabcolsep}{8pt}
  \renewcommand{\arraystretch}{1.2}
  \begin{tabular}{lcc}
    \toprule
    \textbf{Algorithm} & \textsc{Sent.}↑ & \textsc{KL}↓ \\
    \midrule
    $\pi_{\text{ref}}$ & -0.52 & - \\
    Q\#                & 0.22 & 7.51 \\
    Q\#-HF             & -0.44 & 5.22 \\
    \midrule
    \textbf{PITA (ours)} & 0.01 & 7.27 \\
    \bottomrule
  \end{tabular}
  \endgroup
\end{table}

\paragraph{Discussion.} The range of rewards generated by \texttt{lvwerra/distilbert-imdb} typically lies between -3 and +3. Q\# with access to true rewards performs the best. Among the algorithms that only have access to a preference signal, rather than the true reward, PITA performs the best. 

Sentiment alignment is inherently difficult, and Q\# has access to a pre-trained DistilBERT model specifically tuned for sentiment analysis. In contrast, reproducing a preference function with a similar structure requires much more preference data. Despite this challenge, PITA is able to extract valuable information from the given preference data and significantly outperforms Q\#-HF, which appears to have poorly learned a reward model. This demonstrates PITA's ability to leverage limited preference data for effective alignment.

\begin{figure}[ht]
\centering
\begin{minipage}{\textwidth}
\begin{tcolorbox}[colback=white, colframe=black]

\textbf{Prompt:} Paul Mazursky misfires on this film. The writing, direction, casting, and acting (with the exception of Victorio Gassman) are all off the mark. I remember the reviews from 20+ years ago being mediocre, but I thought it still might

\dasheddline

\textbf{Completion from $\pi^{\text{ref}}$:} \textcolor{red!80!black}{be good. But by today's standards, the film just isn't good enough. It's a muddled and overly long (about 2 1/2 hours) film that ultimately does not live up to its potential as a classic film noir.
}\\ 
\dasheddline

\textbf{Completion from PITA:} \textcolor{blue!30!green}{be good. I'm glad to see how far it's come. This is not a great film, but it's definitely a lot better than the original. I think that makes it more enjoyable to watch. The acting is excellent, with a lot of great performances by both Mar
}

\end{tcolorbox}
\end{minipage}
\caption{Comparison of IMDB review completions (constrained to 60 tokens) from $\pi_{\text{ref}}$ vs PITA.  PITA is able to change the sentiment positive, while $\pi_{\text{ref}}$ fails to do so.}  % involving book fractions.}
\label{fig:imdb_example}
\end{figure}

\section{Additional Experimental Details}\label{app:additonal-experiments}

\subsection{Ablation Study on PITA training for GSM8K}

We benchmark \textbf{PITA} on the GSM8K grade-school arithmetic corpus~\cite{cobbe2021training}, which was also considered in the main paper. In contrast to the main paper, we evaluate PITA with only a single round of training i.e. ($k=1$ in Algorithm 1). We present our results in Table~\ref{tab:gsm8k_app_ablation}

\begin{table}[H]
    \centering
    \caption{Performance on the \textsc{GSM8K} \emph{test} split.}
    \label{tab:gsm8k_app_ablation}
    \begingroup
    \setlength{\tabcolsep}{6pt}
    \begin{tabular}{lccc}
        \toprule
        \textbf{Algorithm} & \textbf{pass@1} $\uparrow$ & \textbf{maj@8} $\uparrow$ & \textbf{KL}  \\
        \midrule
        $\pi_{\mathrm{ref}}$ & 69 & 85 & - \\
        Q\# & 78.4 & 88.1 & 2.65 \\
        Q\#-HF & 49.9 & 72.0 & 18.85 \\
        \midrule
        PITA (ours)  & 74.8 & 86.1 & 18.4 \\
        \bottomrule
    \end{tabular}
    \endgroup
\end{table}

We see a clear drop in performance when trained with limited data in both pass@1, and maj@8 metrics. Although the choice of two rounds ($k=2$ in Algorithm 1) was made to achieve a fairer comparison with Q\#, an additional round  of training allows for a greater diversity of preference data. The iterative training process is a key step in enabling PITA to better adapt to the preferences, leading to more robust results in alignment tasks.

\subsection{Reward Training for Q\#-HF}

Recall that Q\#-HF trains a reward model as an intermediate step for using the Q\# algorithm. We would like to highlight the sensitivity of algorithmic performance of reward training methods.
The results from Table~\ref{tab:gsm8k_app_ablation} involve a single round of training. That is, to ensure a fair comparison with other algorithms, the reward function for single-round Q\#-HF was trained using only 50\% of the data used in the main paper, so that the total data across reward estimation and policy training remained comparable. We observe a substantial drop in performance of the Q\# baseline which can we attributed to the poor reward separation between correct and incorrect generations as shown below.

% Given that Q\#-HF performed poorly in this setup, we decided to allow the use of the full dataset for reward estimation in the Q\#-HF algorithm. This modification aims to provide a more representative evaluation of its potential (albeit at the expense of more training data).

\textbf{Reward-Model Calibration:} Figure~\ref{fig:reward_hist} contrasts reward scores assigned by the BT model to correct versus incorrect generations produced by the baseline. A larger separation indicates stronger alignment between the reward and task success.

\begin{figure}[H]
    \centering
    \includegraphics[width=\linewidth]{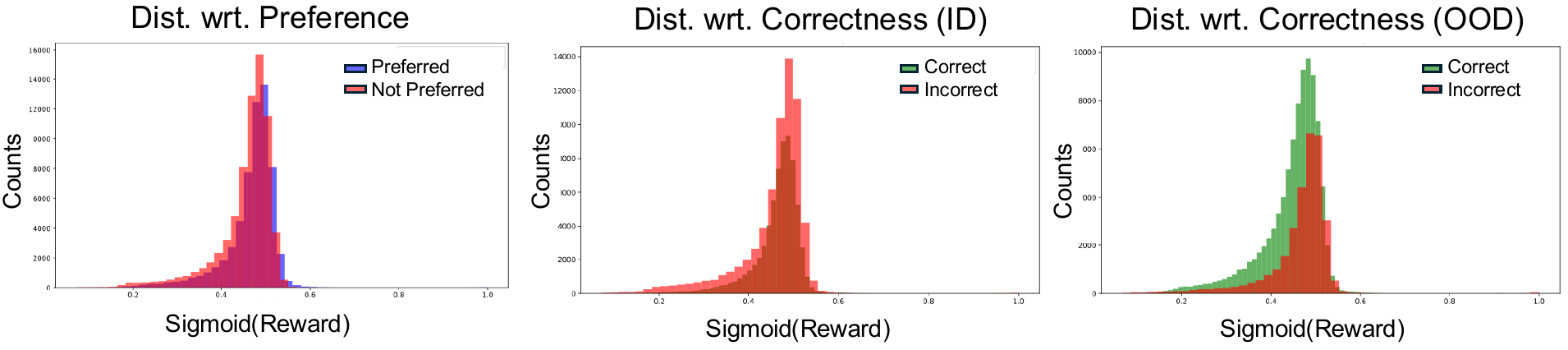}
    \caption{Distribution of learned reward scores w.r.t preferred/correct (green) and not preferred/incorrect (red) for ID(in-dist.)/OOD(out-of-dist.) data.  The reward function is difficult to learn.}
    \label{fig:reward_hist}
\end{figure}

%%%%%%%%%%%%%%%%%%%%%%%%%%%%%%%%%%%%%%%%%%%%%%%%%%%%%%%%%%%%
% references
%\section*{References}
\clearpage
\bibliographystyle{plainnat}
\bibliography{biblio}
%%%%%%%%%%%%%%%%%%%%%%%%%%%%%%%%%%%%%%%%%%%%%%%%%%%%%%%%%%%%
%\newpage
%\input{arxiv/sections/checklist}

\end{document}